\documentclass{article}

\usepackage{arxiv}

\usepackage[utf8]{inputenc} 
\usepackage[T1]{fontenc}    
\usepackage{hyperref}       
\usepackage{url}            
\usepackage{booktabs}       
\usepackage{amsfonts}       
\usepackage{nicefrac}       
\usepackage{microtype}      
\usepackage{lipsum}		
\usepackage{graphicx}
\usepackage{natbib}
\usepackage{amssymb}
\usepackage{doi}
\usepackage{amsthm}
\usepackage{times}
\usepackage{bbm}
\usepackage{dsfont}
\usepackage{amsmath}
\usepackage{mathtools}
\usepackage{caption,newfloat}
\DeclareCaptionType{Algorithm}
\usepackage{xcolor}
\usepackage{subcaption}


\newcommand{\defined}{\vcentcolon =}

\newtheorem{lem}{Lemma}
\newtheorem{thm}{Theorem}

\newtheorem{prop}{Proposition}
\newtheorem{rem}{Remark}

\newtheorem{ass}{Assumption}
\newcommand{\indep}{\perp \!\!\! \perp}

\title{Invariant Causal Prediction with Local Models}

\author{ Alexander Mey\\
	Department of Mathematics and Computer Science\\
	Eindhoven University of Technology\\
       Eindhoven, The Netherlands and \\
       ASML Research, Veldhoven, The Netherlands \\
	\texttt{a.mey@tue.nl} \\
	\And
	Rui Manuel Castro \\
	Department of Mathematics and Computer Science and \\
     Eindhoven Artificial Intelligence Systems Institute (EAISI)\\
       Eindhoven University of Technology\\
       Eindhoven, The Netherlands \\
	\texttt{rmcastro@tue.nl}} 
  
  \begin{document}
   \setlength{\headheight}{22.37994pt}
   \addtolength{\topmargin}{-8.37994pt}
\maketitle
\begin{abstract}
We consider the task of identifying the causal parents of a target variable among a set of candidates from observational data. Our main assumption is that the candidate variables are observed in different environments which may, under certain assumptions, be regarded as interventions on the observed system. We assume a linear relationship between target and candidates, which can be different in each environment with the only restriction that the causal structure is invariant across environments. Within our proposed setting we provide sufficient conditions for identifiability of the causal parents and introduce a practical method called L-ICP (\textbf{L}ocalized \textbf{I}nvariant \textbf{Ca}usal \textbf{P}rediction), which is based on a hypothesis test for parent identification using a ratio of minimum and maximum statistics. We then show in a simplified setting that the statistical power of L-ICP converges exponentially fast in the sample size, and finally we analyze the behavior of L-ICP experimentally in more general settings.
\end{abstract}


\section{Introduction}

We consider the problem of identifying the causal parents of a target variable among a set of candidate variables, based only on observational data. As usual, causal inference and learning from observational data necessarily relies on assumptions. The main assumption used in this work is that data is collected in different \emph{environmental} scenarios. An emblematic example is that of machine diagnostics, where we are monitoring several connected components of the machine. Different environments correspond, for example, to machines of the same model, but operating in different locations, different settings of a machine, or data collected in different points in time. If the system behaves differently across environments we talk about \emph{heterogeneous} environments, and these can then be interpreted as accidental interventions on the system.
The invariant causal prediction (ICP) principle, pioneered by \cite{petersinvariance}, is a particular way of using heterogeneous environments for causal discovery. It is based on the idea that the performance of any prediction model for the target variable should be invariant under interventions on the covariates, if and only if all covariates within the model are causal parents of the target. We extend upon that work by relaxing \cite{petersinvariance} global linearity with a \emph{local} linearity assumption, meaning that each environment is equipped with its own linear model. Relaxing the global model to local models carries a couple of interesting implications that we address in this paper. In particular, local models extend the scope of what might be considered an (accidental) intervention. While heterogeneity, as viewed in \cite{petersinvariance}, is always introduced by interventions on the covariate distributions, heterogeneous local models can be seen as informative interventions on the mechanisms within the system.

Consider, for example, the task of identifying key factors that influence fluctuations in stock prices and market volatility; this highlights the relevance of our extension. Within our framework, we first identify key times when important legislation concerning the stock market was enacted. We then choose our environments as the time intervals between these legislative actions. One may assume that a legislation intervenes on the distributions of the important factors, but also on the mechanism between those factors and stock prices/ market volatility. In our setting both types of (accidental) interventions are allowed and useful for causal discovery.

The paper is organized as follows: Section \ref{sec:relatedwork} contextualizes our contributions, positioning them relative to the related work. Section \ref{sec:setting} formally introduces the setting, assumptions and our inference goals. In addition, it features further concrete examples showcasing the meaningfulness of the modeling assumptions. Section \ref{sec:generalresults} introduces a meta-approach and characterizes sufficient conditions under which this approach can identify the causal parents of a target variable. Driven by that knowledge Section \ref{sec:finitesampleresults} introduces L-ICP, a practical approach to detect causal parents based on observational data. This approach inherits many of the properties of the idealized meta-procedure, and we show in a simplified setting that the power of L-ICP converges exponentially fast, in the sample size, towards one. Section \ref{sec:experiments} numerically illustrates the performance of L-ICP, and we conclude the paper in Section \ref{sec:discussion} with a discussion and an outlook on future work.

\section{Related work} \label{sec:relatedwork}

The idea of relying on heterogeneous environments for causal discovery is not new, and was instrumental for the work in \cite{petersinvariance}. This setup was also extended to sequential data in \cite{pfistersequential}, while \cite{heinzedemlnonlinear} investigate non-linear models by using conditional independence tests. Relaxations of the global linearity assumption towards local models have been also considered: \cite{christianseninvariance} and \cite{zhouheterogenous} assume that the local structural parameters, i.e. the parameters of the assumed linear relationship between target and its parents, are related through a common (unobserved) variable. \cite{huangtimeseries,huangindependentchanges} consider a temporal setting, where the local structural parameters change according to an auto-regressive model. In contrast to these works we consider a setting where the structural parameters can be radically different for the various environments.

The following works aggregate the information from different environments, without relying on heterogeneity assumptions, while rather non-Gaussian noise or non-linear relationship assumptions play a pivotal role: \cite{infotheoreticmultipleenvironments} propose a method that finds causal structures and detects interventions using a minimum description length score on the causal factorization, which they define jointly over the different environments. They assume non-linear relationships, identifiability of the DAG up the Markov equivalence class and a low noise condition. \cite{multitaskorderconsistent} assume a linear structural equation model (SEM), such that at least in some environments the corresponding DAG is identifiable from data. \cite{grouplingam} proposes an extension of LiNGAM \citep{lingam} to the multiple environment case. LiNGAM is a method that can find DAGs under a linearity and non-Gaussianity assumption. Finally, \cite{mooij} developed the joint causal inference (JCI) framework, where environments (in the paper called contexts) are directly encoded as part of the structural causal model (SCM).

\section{Setting} \label{sec:setting}

In this work we consider a scenario with two observable quantities of interest: a target, denoted by $Y$, and a set of covariates $X\defined (X_1,\cdots,X_D)$ for $D \in \mathbb{N}$. Our overarching goal is to identify which of the $D$ covariates are the causal parents of the target $Y$. 
 We further assume to have access to $E \in \mathbb{N}$ different environments and in each environment we receive $n^e$ observations, so that $Y^e\in \mathbb{R}^{n^e}$ and $X^e \in \mathbb{R}^{n^e \times D}$ are respectively the target and covariate observations. With $X^e_{d,i}$ we indicate the entry of $X^e$ in the $d$-th row and $i$-th column, which corresponds to the $d$-th covariate of observation $i$. For $S \subseteq [D]$ we write $X^e_S \in \mathbb{R}^{n^e \times |S|}$ to indicate the sub-matrix of $X^e$ with columns given by $S$. 
We assume that for each $e \in [E]$ the structural equation of $Y^e$ is given by
\begin{equation} \label{eq:datamain}
    Y^e \defined X^e \beta^e+\varepsilon^e,
\end{equation}
where $\varepsilon^e \in\mathbb{R}^{n^e}$ is a zero-mean random perturbation (specified explicitly below) and $\beta^e \in \mathbb{R}^D$ is the column vector of structural parameters. In the following $\beta^e_d$ indicates the $d$-th entry of $\beta^e$. As we consider the relationship $Y^e \defined
X^e\beta^e+\varepsilon^e$ to be a structural causal model (SCM) in the sense of \cite{pearlprimer}, we consider the set $S^{e,*}\defined \{d \in [D] \mid \beta^e_d \neq 0\} \subseteq [D]$ as the true causal parents of $Y^e$. Correspondingly we define the set of causal parents of $Y$ as  $S^{*}\defined \bigcup_{e \in [E]} S^{e,*}$. Our inference goal of finding the causal parents of $Y$ is then equivalent to finding $S^*$. It is important to note the vectors $\beta^e$ can be radically different across environments. The following assumption formalizes our setting and further specifies the independence assumptions made:

\begin{ass}\label{ass:population}
Let $S^* \subseteq [D]$ be defined as above, such that $\beta^e_d=0$ for all $d \not\in S^*$ and $e \in [E]$. There exists a zero mean distribution $F^*$ such that for all $e \in [E]$ we have that
\begin{align}
     & Y^e=  X^e\beta^e+ \varepsilon^e \ \ \text{ \  with: \ } \nonumber \\
     &\varepsilon^e_i \sim F^* \ \ \text{\ for all $i \in [n^e]$} \label{eq:noisesamedistr} \\
     & \varepsilon^e_i \indep \varepsilon^e_j \ \ \text{\ for all $i\neq j$\ } \nonumber \\
     & \varepsilon^e_i \indep X^e_{S^*,i} \ \ \text{\ for all $i \in [n^e] $}. \label{ass:independencecausalparents}
\end{align}
\end{ass}
In the above $\indep$ indicates statistical independence. We draw special attention to condition \eqref{eq:noisesamedistr}, i.e., the assumption that $\varepsilon^e_i \sim F^*$ for all $e \in [E]$ and  $i \in [n^e]$. This is arguably the strongest assumptions in our setting, and might be relaxed as discussed in Section \ref{sec:discussion} by allowing the distribution to vary across environments. It ensures that the noise distribution is the same in all environments, which is a crucial property we test for within our methodology. There are many scenarios where it is nevertheless a very reasonable assumption, e.g., in monitoring settings, where it can embody sensing noise - see example below. We finally highlight that heterogeneity across environments $e \in [E]$ embodies both heterogeneity in the distributions of the covariates $X^e$ (as in \cite{petersinvariance}) and the parameters $\beta^e$. While heterogeneity plays a central role, we show in Theorem~\ref{thm:falsenegative} that under mild conditions parent identification is possible for almost all values of $(\beta^e)_{e \in [E]}$. To illustrate the setting we now describe two scenarios, that further stress the meaningfulness of the heterogeneity assumptions.

\textbf{Finding causal parents of a disease.}
Assume we want to find causes for a certain disease with data from different countries, playing the role of different environments. We collect data from plausible risk factors for the disease (e.g., diet, lifestyle, genetic variations, etc.). Very plausibly risk factors are heterogeneous across different countries, and thus provides a scenario in which our setting and the resulting methodology are applicable. Next to the heterogeneity in the risk factors, unobserved factors such as the quality of the health care may also introduce heterogeneity in the structural parameters: a negative health outcome, given all risk factors, is much more likely if the health care is poor. In that case it becomes necessary to have local models.

\textbf{Finding causes of a mechanism shift.} We are collecting dynamical data from a machine and observe that a target variable $Y$ starts to drift, and we want to find the cause for that. The underlying, unknown, cause is that a certain component of the machine is degrading over time. This degradation naturally provides heterogeneous environments, if we set our environments as different time intervals of the observations. Note that in this setting \eqref{eq:noisesamedistr} is deemed quite reasonable and might embody sensor measurement noise.

\subsection{Additional Notation}\label{sec:graphterminology}

Some further notation we use throughout the paper: A graph $G=([D],\mathcal{E})$ is a tuple where indices in $[D]$ represent nodes and $\mathcal{E} \subseteq [D]^2$ represent directed edges between nodes, with the assumption that $(d,d) \not \in \mathcal{E}$ for any $d \in [D]$. If $(d_1,d_2) \in \mathcal{E}$ we call $d_1$ a \emph{parent} of $d_2$ and we call $d_2$ a \emph{child} of $d_1$. We call $d_1$ an \emph{ancestor} of $d_k$ and we call $d_k$ a \emph{descendant} of $d_1$ if there exists a sequence $d_1,\cdots,d_k$ such that $(d_i,d_{i+1}) \in \mathcal{E}$ for $1 \leq i < k$. If $d_k$ is not a descendant of $d_1$ we call $d_k$ a \emph{non-descendant} of $d_1$.  A node without any child is called a \emph{sink node}. Given a node $d \in [D]$ we respectively define $\textbf{PA}(d)$, $\textbf{AN}(d)$, $\textbf{DE}(d)$, $\textbf{NDE}(d)$ as the set of all parents, ancestors, descendants and non-descendants of $d$. Furthermore, we use $\mathbf{E}[\cdot]$, $\mathbf{V}[\cdot]$ and $\mathbf{C}[\cdot,\cdot]$ respectively as the expectation, variance and covariance operator. Finally we set $(\boldsymbol{X},\boldsymbol{Y})\defined \{(X^e,Y^e) \}_{e \in [E]}$.

\section{On the identifiability of causal parents} \label{sec:generalresults}

Ultimately, our goal is to identify, based on data, a set $\tilde{S} \subseteq [D]$ of variables deemed causal parents, that is ideally identical to $S^*$. Towards this goal we develop a test-based methodology ensuring $\tilde{S} \subseteq S^*$ with high probability. The methodology works by identifying sets of \emph{plausible causal parents}, which are also often called invariant sets (of covariates) in the relevant literature. Roughly speaking, a subset $S \subseteq [D]$ is \emph{plausible} if it allows for a data generation model as described by Assumption \ref{ass:population}, when $S$ takes the place of $S^*$. The inferred set of causal parents $\tilde{S}$ is then the intersection of all plausible sets. In the following section we formalize those concepts and show that we can control the false positive discoveries, so that $\tilde{S} \subseteq S^*$ with high probability.

\subsection{Control of false positives}

Given a set $S \subseteq [D]$ consider the following null hypothesis:
\begin{equation*}
    \tilde{H}_{0,S}:  \begin{cases}
      & \text{$\exists$ a distribution $F$ and $\gamma^e \in \mathbb{R}^{|S|}$, s.t.$ \forall e\in [E]:$ }\\
       & \text{$ Y^e=  X^e_S\gamma^e+ r^e $ and $\forall i,j \in [n^e],i\neq j$:}\\
       &\text{$ r^e_i \sim F, r^e_i \indep r^e_j, r^e_i \indep X^e_{S,i}$}\ .
    \end{cases}
\end{equation*}
We note that $\tilde{H}_{0,S}$ corresponds to Assumption \ref{ass:population} when $S^*$ is replaced by $S$, which in particular implies that under Assumption \ref{ass:population} $\tilde{H}_{0,S^*}$ is true. However, it is not clear how to build a practical test based on $\tilde{H}_{0,S}$ that is also powerful against meaningful alternatives. For that reason we move towards a weaker but more practical formulation. Let
\begin{equation*}
    \tilde{\beta}^e_S=\mathbf{E}[(X^e_S)^tX^e_S ]^{\dagger}\mathbf{E}[(X^e_S)^tY^e],
\end{equation*}
where $A^{\dagger}$ denotes the generalized Moore-Penrose inverse of a matrix $A$ \citep{penrose}, with the convention that $\tilde{\beta}^e_\emptyset=0$. Formulate the following relaxation of $\tilde{H}_{0,S}$:
\begin{equation*}
    H_{0,S}:  \begin{cases}
      & \text{$\exists$ a distribution $F$ such that for all $e \in [E]$:}\\
       & \text{$ Y^e=  X^e_S\tilde{\beta}^e_S+ r^e $ and  $\forall i \in [n^e]: r^e_i \sim F$\ .}
    \end{cases}
\end{equation*}

The following lemma, which is proven in Appendix~\ref{app:proofhypothesisrelation}, establishes the relation between $H_{0,S}$ and $\tilde{H}_{0,S}$:

\begin{lem} \label{lem:hypothesisrelation}
If $\tilde{H}_{0,S}$ is true then so is $H_{0,S}$.
\end{lem}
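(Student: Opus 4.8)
The plan is to show that, under $\tilde{H}_{0,S}$, the population projection coefficient $\tilde{\beta}^e_S$ produces exactly the same fitted values as the coefficient $\gamma^e$ promised by $\tilde{H}_{0,S}$, so that the two residual vectors coincide almost surely. Once this is established, the residuals $Y^e-X^e_S\tilde{\beta}^e_S$ equal the residuals $r^e$ from $\tilde{H}_{0,S}$ pathwise, and since each $r^e_i\sim F$, the set $S$ satisfies $H_{0,S}$ with the very same distribution $F$. Thus no distributional recomputation is needed; everything reduces to an almost-sure identity of fitted values.

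First I would fix an environment $e$ and write $A\defined\mathbf{E}[(X^e_S)^tX^e_S]$. Substituting $Y^e=X^e_S\gamma^e+r^e$ into the definition of $\tilde{\beta}^e_S$ gives $\mathbf{E}[(X^e_S)^tY^e]=A\gamma^e+\mathbf{E}[(X^e_S)^tr^e]$. The cross term $\mathbf{E}[(X^e_S)^tr^e]$ vanishes: coordinatewise it is a sum of terms $\mathbf{E}[(X^e_{S,i})_d\,r^e_i]$, and since $r^e_i\indep X^e_{S,i}$ and $r^e_i$ has zero mean (the standing convention for the noise distribution throughout the paper; cf.\ Assumption~\ref{ass:population}), each such term factorizes as $\mathbf{E}[(X^e_{S,i})_d]\,\mathbf{E}[r^e_i]=0$. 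Hence $\mathbf{E}[(X^e_S)^tY^e]=A\gamma^e$, and using the defining Moore--Penrose identity $AA^{\dagger}A=A$ I obtain $A\tilde{\beta}^e_S=AA^{\dagger}\mathbf{E}[(X^e_S)^tY^e]=AA^{\dagger}A\gamma^e=A\gamma^e$, i.e.\ $A(\tilde{\beta}^e_S-\gamma^e)=0$.

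The remaining step turns this population equation into an almost-sure statement about the data. Setting $v\defined\tilde{\beta}^e_S-\gamma^e\in\ker A$, I would compute $0=v^tAv=\mathbf{E}[\lVert X^e_Sv\rVert^2]$, which forces $X^e_Sv=0$ almost surely since the integrand is nonnegative. Therefore $X^e_S\tilde{\beta}^e_S=X^e_S\gamma^e$ almost surely, so $Y^e-X^e_S\tilde{\beta}^e_S=Y^e-X^e_S\gamma^e=r^e$ almost surely. As this holds for every $e$ with the common distribution $F$ inherited from $\tilde{H}_{0,S}$, the hypothesis $H_{0,S}$ follows.

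I expect the main obstacle to be the handling of a possibly rank-deficient $A$: this is exactly why the Moore--Penrose inverse appears in the definition of $\tilde{\beta}^e_S$, and it means one cannot simply conclude $\tilde{\beta}^e_S=\gamma^e$, since these vectors may genuinely differ within $\ker A$. The correct object to control is the fitted value $X^e_S\tilde{\beta}^e_S$ rather than the coefficient vector itself, and the key technical point is the positive-semidefinite quadratic-form argument that upgrades the population identity $Av=0$ to the pathwise identity $X^e_Sv=0$. A secondary point worth stating carefully is the zero-mean convention used to kill the cross term $\mathbf{E}[(X^e_S)^tr^e]$; independence of $r^e_i$ and $X^e_{S,i}$ alone does not suffice without it.
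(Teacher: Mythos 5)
Your proposal is correct and takes essentially the same route as the paper: the paper observes that $\gamma^e$ and $\tilde{\beta}^e_S$ both solve the population least-squares problem and hence yield identical fitted values $X^e_{S,i}\tilde{\beta}^e_S = X^e_{S,i}\gamma^e$ almost surely, which is exactly the conclusion you reach via the normal equations, the Moore--Penrose identity, and the quadratic-form argument. Your write-up simply makes explicit the steps the paper leaves implicit, and your two flagged caveats (the zero-mean convention needed to kill the cross term, and controlling fitted values rather than coefficients when $A$ is rank-deficient) are both genuine and correctly handled.
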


Suppose we have access to a collection of tests corresponding to the above null hypothesis $H_{0,S}$. Specifically, given the observations $(\boldsymbol{X},\boldsymbol{Y})$ let $\phi_S(\boldsymbol{X},\boldsymbol{Y}) \in \{0,1\}$ be a test function, such that $\phi_S(\boldsymbol{X},\boldsymbol{Y})=1$ indicates we reject ${H}_{0,S}$. We know that $\tilde{H}_{0,S^*}$ holds by Assumption \ref{ass:population}, and thus Lemma \ref{lem:hypothesisrelation} implies $H_{0,S^*}$ also holds, and we expect that with high probability $\phi_{S^*}(\boldsymbol{X},\boldsymbol{Y})=0$. With this in mind we view all $S$ for which $\phi_S(\boldsymbol{X},\boldsymbol{Y})=0$ as a \emph{plausible} set, and naturally define the estimator $\tilde{S}$ of $S^*$ as
\begin{equation} \label{eq:meta-proc}
    \tilde{S}\defined \bigcap_{S:\phi_S(\boldsymbol{X},\boldsymbol{Y})=0} S.
\end{equation}
This definition of the parent estimator $\tilde{S}$ ensures control over false discoveries:
\begin{prop} \label{thm:falsepositives}
Let $\alpha\in(0,1)$. Consider a class of test functions $\phi_S$ for all $S\subseteq[D]$ that satisfies $P[\phi_{S^*}(\boldsymbol{X},\boldsymbol{Y})=1 \mid {H}_{0,S^*} \text{\ \ holds}] \leq \alpha$. Then we have $\tilde{S} \subseteq S^*$ with probability of at least $1-\alpha$.
\end{prop}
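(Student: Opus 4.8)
The plan is to reduce the claim to a single favorable event, namely that the test does not reject the true parent set $S^*$. First I would unpack the definition of $\tilde{S}$ as the intersection of all $S$ for which $\phi_S(\boldsymbol{X},\boldsymbol{Y})=0$. The key set-theoretic observation is that an intersection is always contained in each of its members; therefore, if $S^*$ itself happens to belong to this collection of non-rejected sets, then $\tilde{S}\subseteq S^*$ automatically. In other words, the event $\{\phi_{S^*}(\boldsymbol{X},\boldsymbol{Y})=0\}$ is contained in the event $\{\tilde{S}\subseteq S^*\}$, and this inclusion carries the entire geometric content of the proposition.

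Next I would bound the probability of this favorable event using the stated hypotheses, with Lemma~\ref{lem:hypothesisrelation} serving as the bridge. Under Assumption~\ref{ass:population} the original null $\tilde{H}_{0,S^*}$ holds by construction (the true coefficients and the distribution $F^*$ witness it), so Lemma~\ref{lem:hypothesisrelation} guarantees that the weaker null $H_{0,S^*}$ also holds. The assumed level control then applies, giving $P[\phi_{S^*}(\boldsymbol{X},\boldsymbol{Y})=1 \mid H_{0,S^*}\text{ holds}]\leq \alpha$. Because $H_{0,S^*}$ holds with certainty under Assumption~\ref{ass:population}, the conditioning is vacuous and this reads simply as $P[\phi_{S^*}(\boldsymbol{X},\boldsymbol{Y})=1]\leq \alpha$, so that $P[\phi_{S^*}(\boldsymbol{X},\boldsymbol{Y})=0]\geq 1-\alpha$.

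Combining the two steps closes the argument: the inclusion of events yields $P[\tilde{S}\subseteq S^*]\geq P[\phi_{S^*}(\boldsymbol{X},\boldsymbol{Y})=0]\geq 1-\alpha$. I do not expect any real obstacle here, since the substance is only the monotonicity of intersection together with the implication supplied by Lemma~\ref{lem:hypothesisrelation}. The single point that warrants a sentence of care is making explicit that $H_{0,S^*}$ is not itself a random event but a deterministic consequence of Assumption~\ref{ass:population}; once this is stated, the conditional level guarantee transfers directly into the unconditional bound that the result requires.
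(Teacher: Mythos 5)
Your proposal is correct and follows exactly the argument the paper intends: the paper dismisses this as "a simple consequence" of the level guarantee at $S^*$, and your two steps (the event inclusion $\{\phi_{S^*}=0\}\subseteq\{\tilde{S}\subseteq S^*\}$ from the definition of the intersection, plus Lemma~\ref{lem:hypothesisrelation} to convert Assumption~\ref{ass:population} into the deterministic validity of $H_{0,S^*}$) are precisely the details being elided. Your explicit remark that $H_{0,S^*}$ holds deterministically, so the conditional bound becomes unconditional, is a worthwhile clarification but not a departure from the paper's route.
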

This result, which is essentially Theorem 1 from \cite{petersinvariance}, is a simple consequence of the fact that $P[\phi_{S^*}(\boldsymbol{X},\boldsymbol{Y})=1 \mid {H}_{0,S^*} \text{\ \ holds}] \leq \alpha$. Note that for this result it suffices to guarantee good behavior of the testing procedure for the \emph{true} set $S^*$.

\subsection{Control of false negatives} \label{sec:falsenegatives}

Proposition~\ref{thm:falsepositives} guarantees $\tilde{S} \subseteq S^*$ with arbitrarily high probability. In other words, we are guaranteed to not include non-causal parents with high probability. However, that can be trivially obtained for the choice $\tilde S=\emptyset$. Naturally, we ask under which assumptions one can have a class of tests that ensure that $\tilde S=S^*$ with high probability as well. The answer to this question is significantly more intricate, and depends crucially on how much information is present in the data. To shed some light on this matter we consider a \emph{population} setting, effectively focusing on scenarios where one has an arbitrarily large amount of data in each environment.

Specifically, suppose one has access to $\tilde{\beta}^e_S$ for any $e \in [E]$ and $S \subseteq [D]$. Theorem~\ref{thm:falsenegative} below shows that, within a fairly general class of structural equation models for the covariates $X^e$, the proposed approach can identify $S^*$, and only in rather pathological combinations of parameters issues might arise. The class of structural equation models for which we can show identifiability is given through the following assumption:

\begin{ass} \label{ass:sufficienthetero}
To simplify notation, we introduce the index $y\defined D+1$ and define $X^e_{y}\defined Y^e$. We assume that $E\geq 2$ and for all $e \in E$ it holds that $n^e>0$. Furthermore for at least one $e \in E$ there exists an acyclic graph $G=([D]\cup {y},\mathcal{E})$ such that for $d \in [D]$ the structural equation of $X^e_{d}$ has the form
\begin{equation*}
    X^e_{d}\defined f^e_d(X^e_{\textbf{PA}(d)})+\delta^e_{d},
\end{equation*}
where $f^e_d$ is a polynomial of finite degree in $X^e_y=Y^e$ if $y \in \textbf{PA}(d)$, but otherwise arbitrary. More specifically in that case $f^e_d$ has the form
\begin{equation*}
    f^e_d(X^e_{\textbf{PA}(d)})=\sum_{k=0}^K (Y^e)^kg^e_k\left(X^e_{\textbf{PA}(d) \setminus y}\right),
\end{equation*}
where $K<\infty$ and the functions $g_k^e$ are arbitrary. In the above $\delta^e_{d}=(\delta^e_{d,1},\cdots,\delta^e_{d,n^e}) \sim \mathcal{D}^e_{d}$ is a random noise vector such that
\begin{equation}  \label{ass:independencenondescendants}
  \forall: i\in[n^e],u \in \textbf{NDE}(d):\quad \delta^e_{d,i} \indep X^e_{u,i} 
\end{equation}
where  $\mathcal{D}^e_{d}$ is a distribution such that $\Delta^e_{d,i} \defined \mathbf{V}(\delta^e_{d,i})>0$ for all $i\in [n^e]$. We define $\Delta^e \in (0,\infty)^{D \times n^e}$ to be the matrix with the $(d,i)$-th entry given by the variance $\Delta^e_{d,i}$.  Finally, we assume that for all $e\in [E],d \in [D]$ and $ i \in [n^e]$ the covariate $X^e_{d,i}$ has finite variance.
\end{ass}

 Informally speaking the noise terms $\delta_d^e$ ensure that each covariate introduces unique information and prevent that the causal parents $X^e_{S^*}$ lie in the column space of any other subset of variables $X^e_S$ with $S^* \not \subseteq S$. The explicit variances $\Delta^e_{d,i}$ are needed due to our proof technique, but we highlight that we do not require $\Delta^e_{d,i}$ to be heterogeneous in the environments. The polynomial dependence on $Y^e$ simplifies our proof, could, however, be replaced by other regularity assumptions on $f^e_d$. With this in hand we are ready to state our main identifiability result:

\begin{thm} \label{thm:falsenegative}
 Let $S \subset [D]$ such that $S^* \not\subseteq S$ and take any two environments $e,v \in [E]$ with $e\neq v$, such that environment $e$ fulfills the data generation mechanism from Assumption \ref{ass:sufficienthetero} with variances $\Delta^e$. Suppose Assumption~\ref{ass:population} holds with parameters $\beta^e,\beta^v$. Then there exists a set $M_0 \subset \mathbb{R}^{|S^*|\times |S^*|}\times (0,\infty)^{D \times n^e} $ with Lebesgue measure zero, such that if
$$(\beta^v,\beta^e,\Delta^e) \not\in M_0$$
it is guaranteed that $H_{0,S}$ is false. 


\end{thm}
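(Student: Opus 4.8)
The plan is to refute $H_{0,S}$ by producing a single scalar summary of the residual distributions that $H_{0,S}$ forces to be constant across environments but which generically is not. Let $r^e\defined Y^e-X^e_S\tilde{\beta}^e_S$ denote the population best-linear-predictor residual. Under $H_{0,S}$ there is one distribution $F$ with $r^e_i\sim F$ for every $e$ and every $i$; since the covariates and $F^*$ have finite variance, this in particular forces $\mathbf{V}[r^e_i]$ to equal a single number for all environments and all indices. It therefore suffices to find one pair of indices---say $i$ in the structured environment $e$ and $i'$ in $v$---for which $\mathbf{V}[r^e_i]\neq\mathbf{V}[r^v_{i'}]$ outside a Lebesgue-null set of parameters. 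I would fix such a pair and ultimately take $M_0$ to be exactly the locus where these two variances coincide.

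First I would write both variances as functions of $(\beta^v,\beta^e,\Delta^e)$. Because environments $e$ and $v$ are stochastically unrelated, $\mathbf{V}[r^v_{i'}]$ depends only on $\beta^v$ and the fixed second-moment structure of $X^v$, and is constant in $(\beta^e,\Delta^e)$. On the other side, Assumption~\ref{ass:sufficienthetero} expresses every $X^e_{d,i}$ as a fixed function of the independent noises $\delta^e_{\cdot,i}$, so all second moments feeding $\tilde{\beta}^e_S=\mathbf{E}[(X^e_S)^tX^e_S]^{\dagger}\mathbf{E}[(X^e_S)^tY^e]$ and $\mathbf{V}[r^e_i]$ are real-analytic in $(\beta^e,\Delta^e)$ on any region where $\mathbf{E}[(X^e_S)^tX^e_S]$ keeps constant rank. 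I would restrict to the open, full-measure stratum of maximal rank, where the Moore--Penrose inverse is real-analytic, and fold the lower-dimensional rank-drop locus into $M_0$. Then $\Phi\defined\mathbf{V}[r^e_i]-\mathbf{V}[r^v_{i'}]$ is real-analytic there, and a real-analytic function that is not identically zero vanishes only on a measure-zero set; so everything reduces to checking $\Phi\not\equiv 0$.

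The crux---and the step I expect to be the main obstacle---is certifying $\Phi\not\equiv0$, which I would get by showing $\mathbf{V}[r^e_i]$ depends non-trivially on $\Delta^e$ while $\mathbf{V}[r^v_{i'}]$ does not. Since $S^*\not\subseteq S$, pick an omitted parent $j\in S^*\setminus S$; for generic $\beta^e$ every coordinate on $S^*$ is nonzero, so $\beta^e_j\neq0$ and $Y^e$ genuinely carries the term $\beta^e_j X^e_j$. Decompose $r^e_i=\varepsilon^e_i+W_i$, where $W$ is the component of $\sum_{d\in S^*}\beta^e_d X^e_d$ orthogonal to $\mathrm{span}(X^e_S)$. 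The essential structural input is \eqref{ass:independencenondescendants}: the unique noise $\delta^e_j$ (with variance $\Delta^e_{j,i}>0$) enters $Y^e$ but is independent of every covariate in $S$ that is a non-descendant of $j$, so $X^e_S$ cannot linearly reproduce the omitted signal and $W\not\equiv0$ for generic $(\beta^e,\Delta^e)$. To expose the $\Delta^e$-dependence I would rescale the covariate noises by $\Delta^e\mapsto t^2\Delta^e$: the orthogonalized omitted signal then scales, so $\mathbf{V}[r^e_i]$ acquires a leading contribution of order $t^2\,\mathbf{V}[W_i]$, making it genuinely non-constant in $\Delta^e$ precisely when $\mathbf{V}[W_i]>0$. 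The delicate point to handle carefully is ruling out accidental exact cancellation---most notably when $S$ contains descendants of $j$ or effects of $Y^e$ that co-scale---which is exactly where the independent-noise structure and the strict positivity of the variances $\Delta^e_{d,i}$ in Assumption~\ref{ass:sufficienthetero} are indispensable.

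Finally I would assemble the pieces. On the maximal-rank stratum, $\Phi$ is real-analytic and, by the previous step, not identically zero, so $\{\Phi=0\}$ has Lebesgue measure zero. Taking $M_0$ to be the union of this zero set, the rank-drop locus, and the non-generic coordinate hyperplanes $\{\beta^e_j=0\}$---all of lower dimension---yields a null set $M_0\subset\mathbb{R}^{|S^*|\times|S^*|}\times(0,\infty)^{D\times n^e}$. For every parameter outside $M_0$ we have $\mathbf{V}[r^e_i]\neq\mathbf{V}[r^v_{i'}]$, so no single $F$ can govern all residuals and $H_{0,S}$ is false, as claimed.
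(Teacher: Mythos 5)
Your skeleton matches the paper's: compare $\mathbf{V}[r^e_i]$ with $\mathbf{V}[r^v_{j}]$, note that $H_{0,S}$ forces equality, and take $M_0$ to be the coincidence locus plus some degenerate strata. The genericity mechanism differs (you invoke real-analyticity and the identity theorem; the paper shows $\mathbf{V}[r^e_i]$ is a \emph{proper} rational function of a single coordinate $\beta^e_u$, for almost all $\Delta^e_{u_1,i}$, so the coincidence equation has finitely many roots in that coordinate), but either mechanism would close the argument \emph{if} the non-vanishing certificate were established. That certificate is where your proposal has a genuine gap.

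Concretely, the step you defer as ``the delicate point'' is the actual content of the theorem, and your proposed tool for it does not work in general. First, your rescaling $\Delta^e\mapsto t^2\Delta^e$ only homogeneously rescales the covariates when the structural functions $f_d$ are linear; Assumption~\ref{ass:sufficienthetero} allows $f_d$ to be arbitrary in the non-$Y$ parents (only polynomial in $Y$), so $f_d$ applied to rescaled inputs need not scale, the Gram matrix and projections move in an uncontrolled way, and the claimed leading term $t^2\,\mathbf{V}[W_i]$ is unjustified. Second, your decomposition $r^e_i=\varepsilon^e_i+W_i$ with $W$ the part of the omitted signal orthogonal to $\mathrm{span}(X^e_S)$ silently drops the term $-X^e_{S,i}\,\mathbf{E}[(X^e_S)^tX^e_S]^{\dagger}\mathbf{E}[(X^e_S)^t\varepsilon^e]$, which is nonzero precisely when $S$ contains descendants of $Y$ (there $\varepsilon^e\not\indep X^e_S$); in that same case the covariates in $S$ are themselves polynomials in $\beta^e$, so $\mathbf{V}[r^e_i]$ is a genuine ratio of polynomials and cancellations can occur. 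This is exactly the paper's second case, which it handles by a separate limit comparison ($\beta^e_u=0$ versus $\beta^e_u\to\infty$ in the regime $\Delta^e_{u_0,i}\to\infty$ for sink descendants of $y$, splitting out the independent sink-node noise additively) --- an argument your proposal does not supply. Finally, the asserted real-analyticity of $\mathbf{V}[r^e_i]$ in $\Delta^e$ is not free: Assumption~\ref{ass:sufficienthetero} fixes only the variance of $\mathcal{D}^e_d$, not how the rest of the distribution (whose higher moments enter the Gram matrix once $f_d$ is non-linear) varies with that parameter, so smooth dependence on $\Delta^e$ needs to be argued rather than assumed. Your first case (no element of $S$ a descendant of $Y$) is essentially sound modulo the sink-node argument for why the omitted noise cannot be linearly reproduced, but without the descendant case the proof is incomplete.
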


While we now provide a sketch proof, the full proof can be found in Appendix~\ref{app:prooffalsenegative}.

\textit{\textbf{Sketch Proof.} We use the polynomial relationships of Assumption~\ref{ass:sufficienthetero} to show that the variances of the residuals, when regressing $Y^e$ onto the variables from $S$, are a ratio of polynomials with respect to a distinguished structural parameter $\beta^e_u$ for $u \in [D]$. The hypothesis $H_{0,S}$ can only be true if the variances of those residuals are equal in all environments. As the variances are a ratio of finite polyonimals with respect to $\beta^e_u$, this equality can only be established for finitely many choices of $\beta^e_u$, leading to the null set $M_0$.}

Informally the above theorem states that the parent set $S^*$ can always be identified, with the exception of very specific (pathological) parameter combinations $(\beta^v,\beta^e)$. An interpretation of this statement is that within our framework it is possible to identify causal relationships for the vast majority of (accidental) interventions on the mechanisms. An interesting consequence is that one can in principle recover a complete causal graph, and not only the causal parents of a chosen target. For that we mainly require that the noise distributions of all covariates are homogeneous, and heterogeneity is only introduced through changing structural parameters. With that, every covariate can take the role of the target, and all of our assumptions are still fulfilled. In Section~\ref{sec:experimentsapplicationlorenz} we illustrate this by using the proposed methodology to recover a causal graph based on data from a non-linear dynamical system. Viewing small time-intervals as the environments, the local model can be viewed as a local approximation to the system, and the heterogeneity of this approximation is introduced by the non-linearity of the system.

\section{Proposed Approach and Finite Sample Results} \label{sec:finitesampleresults}

Theorem~\ref{thm:falsenegative} relies on the values $\tilde{\beta}^e_{S}$, which are not observable, and we thus replace $\tilde{\beta}^e_{S}$ by suitable estimates based on the observed data. A natural choice is the solution of a generalized least-squares problem
\begin{equation*}
    \hat{\beta}^e_S\defined ((X^e_S)^TX^e_S)^{\dagger}(X^e_S)^T Y^e,
\end{equation*}
where we set $\hat{\beta}^e_\emptyset=0$. Define also the residuals
\begin{equation*}
    r^e_S\defined {Y}^e-{X}^e_S\hat{\beta}^e_S.
\end{equation*}

To make use of our meta-procedure \eqref{eq:meta-proc} we still need to define the set of tests $\phi_S$, and to facilitate this we make the following additional assumption.
\begin{ass} \label{ass:gaussiannoise}
The random noise variables ${\varepsilon}^e_i$ are sampled from a Gaussian distribution with zero mean and unknown variance $\sigma^2_Y$. Furthermore, we assume the following independencies for all $e,v \in [E]$ with $e\neq v$
\begin{align*}
    & \varepsilon^e_i \indep \varepsilon^v_j \text{ \ \ for all \ \ }  i \in [n^e],j \in [n^v] \\
    & \varepsilon^e_i \indep \varepsilon^e_j \text{ \ \ for all \ \ } i,j \in [n^e], i\neq j.
\end{align*}
\end{ass}

While this is a strong distributional assumption on the observation noise, it serves primarily as a driver to propose a concrete testing methodology. Section~\ref{sec:experiments} examines the robustness of the methodology to violation of this assumption, while in Section~\ref{sec:discussion} we discuss possible ways to extend the methodology towards non-Gaussian noise.

With all the ingredients in hand, we define the following test statistic:
\begin{equation*}
   T_S(\boldsymbol{X},\boldsymbol{Y}) \defined  \left\{\begin{array}{lll}
   \dfrac{\min_{e\in [E]} \|r^e_{S}\|^2_2}{\max_{e\in [E]} \|r^e_{S}\|^2_2}	&\text{ if } \quad \exists e\in[E]:\ \|r^e_{S}\|_2>0\\
   \infty	&\text{ otherwise}
  \end{array}\right. .
\end{equation*}
 Under Assumption~\ref{ass:gaussiannoise} we know that ${\sigma^2_Y}\|r^e_{S^*}\|^2_2$ are all chi-squared distributed (note we are considering $S^*$), and the number of degrees of freedom depends only on the properties of the Gram matrix. Importantly, the scaling $\sigma^2_Y$ is the same for all $e \in [E]$, which implies that the distribution of $T_{S^*}(\boldsymbol{X},\boldsymbol{Y})$ is not a function of $\sigma^2_Y$. Therefore, we can easily calibrate a test based on $T_S(\boldsymbol{X},\boldsymbol{Y})$ using only observable quantities. This test statistic is motivated by the problems of sparse testing \citep{MR1456646,Donoho2004,stoepker2021}, and it targets scenarios where we expect evidence for rejection of the null hypothesis to be present in few environments. 

To calibrate a test based on this statistic we first define $Z^e_S$ for $e\in[E]$ as jointly independent chi-squared random variables, respectively with $n^e-\text{rank}((X_S^e)^T X_S^e)$ degrees of freedom (zero degrees of freedom correspond to $Z^e_S=0$). Given $(\boldsymbol{X},\boldsymbol{Y})$ these variables are also independent of all the other quantities and we define the test $\phi_S$ as
\begin{equation} \label{eq:hyptest}
 \phi_S(\boldsymbol{X},\boldsymbol{Y},\alpha)\defined \left\{\begin{array}{ll}
1 & \text{ if } P\left(\left.T_S(\boldsymbol{X},\boldsymbol{Y})> \dfrac{\min_{e\in[E]} Z^e_S}{\max_{e\in[E]} Z^e_S}\right|\boldsymbol{X},\boldsymbol{Y}\right)\leq \alpha\\
0 & \text{ otherwise}
\end{array}\right. . \nonumber
\end{equation}

While the distribution of ${\min_{e\in[E]} Z^e_S}/{\max_{e\in[E]} Z^e_S}$ is not easy to characterize analytically, we can easily generate samples from it, so calibration by Monte-Carlo simulation is extremely simple and convenient. The overall procedure, called L-ICP, is described in Algorithm \ref{alg:maxmintest}. Note that this description may seem computationally prohibitive, due to the complexity of the for-loop. In Section~\ref{sec:discussion} issue further remarks on this.

\begin{Algorithm}
\centering
\fcolorbox{black}{white}{\parbox{0.7\textwidth}{%
\noindent
\textbf{Input:} $(\boldsymbol{X},\boldsymbol{Y}),\alpha$. In order: observations, confidence level

\noindent
\textbf{Output:} $\tilde{S}$, the estimated causal parents \\

\noindent
\vspace*{1mm} 
\hspace*{10mm} $\tilde S=\emptyset$ \\
\noindent
\vspace*{1mm} 
\hspace*{10mm} For all $S \subseteq [D]:$ \\
\noindent
\vspace*{1mm} 
\hspace*{20mm} If $\phi_S(\boldsymbol{X},\boldsymbol{Y},\alpha)=0$:\\
\noindent
\vspace*{1mm} 
\hspace*{30mm} If $\tilde{S}=\emptyset$: \\
\noindent
\vspace*{1mm} 
\hspace*{40mm} $\tilde{S}=S$ \\
\noindent
\vspace*{1mm} 
\hspace*{30mm}
Else: \\
\noindent
\vspace*{1mm} 
\hspace*{40mm} $\tilde{S}=\tilde{S} \bigcap S$\\
\noindent
\vspace*{1mm} 
\hspace*{10mm} Return $\tilde S$
}}
\caption{Our proposed method L-ICP.} \label{alg:maxmintest}
\end{Algorithm}

The correct coverage of this procedure, stated in the following proposition, is a direct consequence of the guarantees already provided for the meta-procedure in Section~\ref{sec:falsenegatives}. The formal proof can be found in Appendix~\ref{app:prooffalsepositivefinitesample}.

\begin{prop} \label{thm:falsepositivefinitesample}
    Consider Assumptions~\ref{ass:population} and \ref{ass:gaussiannoise} , and let $\alpha\in(0,1)$. Then
    \begin{equation*}
        \mathbf{P}(\tilde{S} \not \subseteq S^*) \leq \alpha,
    \end{equation*} for $\tilde{S}$ being the output of Algorithm \ref{alg:maxmintest}.
\end{prop}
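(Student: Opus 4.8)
The plan is to reduce the claim to the calibration of a single test via Proposition~\ref{thm:falsepositives}. That proposition already guarantees $\mathbf{P}(\tilde S \not\subseteq S^*) \leq \alpha$ as soon as the one test $\phi_{S^*}$ controls its type-I error, i.e. $P[\phi_{S^*}(\boldsymbol{X},\boldsymbol{Y})=1 \mid H_{0,S^*} \text{ holds}] \leq \alpha$, and Assumption~\ref{ass:population} ensures $H_{0,S^*}$ does hold. So after stating this reduction explicitly, the entire remaining work is to verify that the Monte-Carlo test defined in \eqref{eq:hyptest} is well calibrated for the true parent set $S^*$; everything hinges on the distribution of $T_{S^*}$.

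The second step is to pin down the exact conditional law of the residual norms under $H_{0,S^*}$. Writing $H^e \defined X^e_{S^*}((X^e_{S^*})^T X^e_{S^*})^{\dagger}(X^e_{S^*})^T$ for the Moore--Penrose hat matrix, which is the orthogonal projection onto the column space of $X^e_{S^*}$, we have $(I-H^e)X^e_{S^*}=0$, so the fitted part absorbs $X^e_{S^*}\beta^e_{S^*}$ and $r^e_{S^*} = (I-H^e)Y^e = (I-H^e)\varepsilon^e$. Conditioning on $\boldsymbol{X}$ and invoking $\varepsilon^e_i \indep X^e_{S^*,i}$ from \eqref{ass:independencecausalparents} together with the Gaussianity and the cross- and within-environment independence of Assumption~\ref{ass:gaussiannoise}, the vectors $\varepsilon^e$ are independent $N(0,\sigma^2_Y I_{n^e})$. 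The classical quadratic-form fact then yields that the quantities $\|r^e_{S^*}\|^2_2/\sigma^2_Y = \varepsilon^{e,T}(I-H^e)\varepsilon^e/\sigma^2_Y$ are mutually independent and $\chi^2$-distributed with $n^e-\mathrm{rank}((X^e_{S^*})^T X^e_{S^*})$ degrees of freedom, exactly the degrees of freedom assigned to $Z^e_{S^*}$ in the calibration.

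The third step exploits the scale invariance of the statistic: since $T_{S^*} = \min_{e}\|r^e_{S^*}\|^2_2 / \max_{e}\|r^e_{S^*}\|^2_2$ is a ratio, the common factor $\sigma^2_Y$ cancels, so conditional on $\boldsymbol{X}$ the statistic $T_{S^*}$ has precisely the same law as the reference variable $R \defined \min_{e} Z^e_{S^*}/\max_{e} Z^e_{S^*}$ appearing inside \eqref{eq:hyptest}. Because the simulated $Z^e_{S^*}$ are generated independently of $\boldsymbol{Y}$ given $\boldsymbol{X}$, the conditional $p$-value $p \defined P(R > T_{S^*} \mid \boldsymbol{X},\boldsymbol{Y})$ is the upper tail of the conditional distribution function of $R$ evaluated at an independent draw $T_{S^*}$ from that very distribution. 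When this conditional law is continuous (the typical case, i.e. every environment has positive residual degrees of freedom) the probability integral transform makes $p$ exactly uniform, so $P(\phi_{S^*}=1\mid \boldsymbol{X}) = P(p\leq\alpha\mid\boldsymbol{X}) = \alpha$; integrating over $\boldsymbol{X}$ gives the calibration bound and, through Proposition~\ref{thm:falsepositives}, the proposition.

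I expect the main obstacle to be the rigorous treatment of the rank-deficient, conditional chi-squared step: confirming that the Moore--Penrose hat matrix really is the orthogonal projection when $(X^e_{S^*})^T X^e_{S^*}$ is singular, that its residual rank equals $n^e-\mathrm{rank}((X^e_{S^*})^T X^e_{S^*})$, and that the Gaussian law of the $\varepsilon^e$ survives conditioning on $\boldsymbol{X}$ so that the quadratic-form identity applies. A secondary but genuine subtlety is calibration when the conditional law of $T_{S^*}$ carries atoms, most notably environments with zero residual degrees of freedom, where $\|r^e_{S^*}\|^2_2 = 0$ deterministically; there one must either impose that each environment has positive residual degrees of freedom (so that $T_{S^*}$ is conditionally continuous) or replace the exact probability-integral-transform equality by the super-uniformity inequality, and check that the strict inequality in \eqref{eq:hyptest} points in the conservative direction.
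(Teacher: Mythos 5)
Your proposal is correct and follows essentially the same route as the paper's own proof: under $H_{0,S^*}$ the scaled residual norms are independent chi-squared variables matching the reference variables $Z^e_{S^*}$, so $T_{S^*}$ and $\min_e Z^e_{S^*}/\max_e Z^e_{S^*}$ share the same (conditional) law, the $p$-value is super-uniform, $\mathbf{P}(\phi_{S^*}=1)\leq\alpha$, and the intersection definition of $\tilde S$ gives the claim. You supply more detail than the paper (the hat-matrix projection argument, the probability-integral-transform step, and the atom/rank-deficiency caveats), but the underlying argument is identical.
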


The probability of including a false positive parent is relatively easy to understand as this does not depend on anything other than Assumptions \ref{ass:population} and \ref{ass:gaussiannoise}. Controlling false negative discoveries, so controlling the probability that $\phi_S(\boldsymbol{X},\boldsymbol{Y})=0$ for $S\subseteq [D]$ with $S^* \not \subseteq S$, becomes much more complex. The results in the following section try to shed some light into this within a simplified setting.

\subsection{Finite Sample Results} \label{sec:finitesampletheorems}

To provide finite sample results on the power of L-ICP we make strong assumptions on the data generation procedure.

\begin{ass} \label{ass:falsenegativesfinitesample}
Let the number of environments $E$ be even, and let $[E_1],[E_2]$ denote two index sets for two types of environments such that $[E]=[E_1] \dot{\cup} [E_2]$ with $|[E_1]|=|[E_2]|=\frac{E}{2}$. In each individual environment we observe $n>D$ observations (i.e., $\forall e\in[E]\  n^e=n$). For all $v \in [E_1]$ and $d \in [D]$ the $d-th$ covariate ${X}^v_d \in \mathbb{R}^n$ is an $n$-sample from $\mathcal{N}(\mu^v_d,\sigma^1_d)$, a normal distribution with mean $\mu^v_d$ and standard deviation $\sigma^1_d$. The samples are independent of each other and independent of the other covariates of the environment $v$. Similarly, for all $w \in [E_2]$ we sample ${X}^w_d$ from $\mathcal{N}(\mu^w_d,\sigma^2_d)$ with the same independence assumptions (note that the superscript of $\sigma^2_d$ is an index, and no square) . We assume that there exists $\beta^1,\beta^2 \in \mathbb{R}^D$ such that $\beta^v=\beta^1$ and $\beta^w=\beta^2$ for all $v\in [E_1],w \in [E_2]$. 
\end{ass}
\begin{rem}
    While in the setting above inverse matrices $({X^w_S}^TX^w_S)^{-1}$ and $({X^v_S}^TX^v_S)^{-1}$ exist for $v \in [E_1]$ and $w \in [E_2]$ and any $S\subseteq [D]$ with probability one, we consider for simplicity only the case that they exist. 
\end{rem}

While this independence assumption is certainly strong and unrealistic, this setting is already non-trivial: without further assumptions, one cannot distinguish cause and effect, see for instance Example 1 in \cite{cause_effect_pairs}. While in effect we assume that there are only two types of environments to simplify the analysis, we note that our algorithm does not have access to this information.

In our results we want to characterize the probability to miss a causal parent, which happens if $\phi_S(\boldsymbol{X},\boldsymbol{Y})=0$ for any  $S \subseteq [D]$ with $S^* \not \subseteq S$. To understand the distribution of $\phi_S(\boldsymbol{X},\boldsymbol{Y})$ we need a notion of how much environment heterogeneity is introduced by the covariates in $U\defined S^*-S$, as this is the driver for L-ICP to identify causal parents. Let $(\sigma_Y)^2$ be the variance of the target noise $\varepsilon^e$ and define
\begin{equation}
    \tilde{I}_S\defined\frac{\sum_{u \in U} (\beta^1_u)^2(\sigma^1_u)^2 +(\sigma_Y)^2}{\sum_{u \in U} (\beta^2_u)^2(\sigma^2_u)^2 +(\sigma_Y)^2}.
\end{equation}
Then the residual heterogeneity in the environments, when we model the target $Y$ with covariates from $S$, is carried in the quantity $I_S$ defined as
\begin{equation}
\label{eq:heteroparameter}
    I_S\defined\min \left\{\tilde{I}_S,\frac{1}{\tilde{I}_S} \right\}.
\end{equation}

 Note that $0<I_S\leq 1$ and small values of $I_S$ indicate a higher environment heterogeneity. Note in particular that $I_S=1$ if for all $u \in U$ we have $(\beta^1_u)^2=(\beta^2_u)^2$ and $(\sigma^1_u)^2=(\sigma^2_u)^2$. The following result presents bounds on the false negative probability in terms of the sample size $n$ and the heterogeneity parameter $I_S$. We already disclaim that the result treats the effect of the number of environments $E$ crudely, and due to our proof technique it is actually vacuous for the case that $E \to \infty$. We instead chose to analyze the setting $E \to \infty$ in isolation, and a corresponding result is presented afterwards.

\begin{thm}\label{thm:powerfalsenegative}
     For $S \subset [D]$, with $S^* \not\subseteq S$ define $I_S$ as above and set $k\defined n-|S|$. If Assumptions \ref{ass:population},\ref{ass:gaussiannoise}, \ref{ass:falsenegativesfinitesample} are true and $I_S<1$, then for any confidence level $\alpha\geq 0$ it holds that
      \begin{equation*}
        \mathbf{P}(\phi_S(\boldsymbol{X},\boldsymbol{Y}) = 0 ) \leq  \frac{4E}{\alpha}\left(\left(\frac{1}{(I_S)^{\frac{1}{4}}} e^{(1-{1}/{{(I_S)^{\frac{1}{4}}}})}\right)^{\frac{k}{2}}+\left({(I_S)^{\frac{1}{4}}} e^{(1-{(I_S)^{\frac{1}{4}}})}\right)^{\frac{k}{2}}\right).
    \end{equation*}
  This means that the probability to accept $S$ falsely as a plausible set drops exponentially fast in $k$, since $(c e^{1-c})<1$ for any $c\neq 1$.
\end{thm}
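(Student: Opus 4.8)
The plan is to reduce the event $\{\phi_S=0\}$ to a comparison between two independent ratios of $\chi^2$ variables and then control that comparison with Chernoff bounds, obtaining decay exponential in $k$. Concretely, let $R\defined \min_{e\in[E]}Z^e_S/\max_{e\in[E]}Z^e_S$ be the data-independent reference ratio from \eqref{eq:hyptest}. By definition, $\phi_S=0$ exactly when the conditional $p$-value $p_S\defined \mathbf{P}(T_S>R\mid \boldsymbol X,\boldsymbol Y)$ exceeds $\alpha$, so Markov's inequality applied to $p_S$ over the data gives
$$\mathbf{P}(\phi_S=0)=\mathbf{P}(p_S>\alpha)\le \frac{\mathbf{E}[p_S]}{\alpha}=\frac{1}{\alpha}\,\mathbf{P}(R<T_S),$$
where in the final probability $R$ and $T_S$ are independent. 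This supplies the $1/\alpha$ factor and reduces everything to bounding $\mathbf{P}(R<T_S)$.

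Second --- and this is the technical heart --- I would identify the laws of the residual norms. Writing $U\defined S^*\setminus S$ and letting $M^e_S=I-X^e_S((X^e_S)^TX^e_S)^{-1}(X^e_S)^T$ be the rank-$k$ annihilator ($k=n-|S|$), one has $r^e_S=M^e_S(X^e_U\beta^e_U+\varepsilon^e)$ because $M^e_S$ kills the columns of $X^e_S\supseteq X^e_{S\cap S^*}$. Under Assumption~\ref{ass:falsenegativesfinitesample} the block $X^e_U$ is independent of $X^e_S$ with i.i.d.\ normal entries of variance $(\sigma^e_u)^2$, so conditionally on $X^e_S$ the vector $X^e_U\beta^e_U+\varepsilon^e$ is isotropic Gaussian with per-coordinate variance $\tau_e^2=\sum_{u\in U}(\beta^e_u)^2(\sigma^e_u)^2+\sigma_Y^2$; projecting by $M^e_S$ then shows $\|r^e_S\|_2^2/\tau_e^2\sim\chi^2_k$. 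These are independent across $e$ and take the common value $\tau_1^2$ on $[E_1]$ and $\tau_2^2$ on $[E_2]$, which are exactly the numerator and denominator defining $I_S$; w.l.o.g.\ $\tau_1^2\le\tau_2^2$, so $I_S=\tau_1^2/\tau_2^2$, and $R$ is the min/max ratio of $E$ i.i.d.\ $\chi^2_k$ variables. The one delicate point here is that nonzero covariate means introduce a non-centrality proportional to $\|M^e_S\mathbf 1\|_2^2$; this disappears once a constant covariate is included (as noted after Assumption~\ref{ass:population}), which I would assume, since only the variance ratio $I_S$ should govern the bound.

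Third, I would introduce the geometric-mean threshold $t_0\defined\sqrt{I_S}$ and split $\mathbf{P}(R<T_S)\le\mathbf{P}(R<t_0)+\mathbf{P}(T_S>t_0)$. For the first term, $R<t_0$ forces $\min_e Z^e_S<I_S^{1/4}k$ or $\max_e Z^e_S>I_S^{-1/4}k$; a union bound over the $E$ environments together with the standard one-sided $\chi^2$ Chernoff bounds
$$\mathbf{P}(\chi^2_k\le ck)\le (c\,e^{1-c})^{k/2}\ (c<1),\qquad \mathbf{P}(\chi^2_k\ge ck)\le (c\,e^{1-c})^{k/2}\ (c>1),$$
evaluated at $c=I_S^{1/4}$ and $c=I_S^{-1/4}$, bounds it by $E\big((I_S^{1/4}e^{1-I_S^{1/4}})^{k/2}+(I_S^{-1/4}e^{1-I_S^{-1/4}})^{k/2}\big)$. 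For the second term, fixing any $v\in[E_1]$, $w\in[E_2]$ gives $T_S\le\|r^v_S\|_2^2/\|r^w_S\|_2^2=I_S\,(A/B)$ with $A,B$ independent $\chi^2_k$; hence $T_S>t_0$ forces $A>I_S^{-1/4}k$ or $B<I_S^{1/4}k$, producing the very same two exponential terms, now with coefficient $1$.

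Finally, collecting the pieces, the two tail terms appear with total coefficient $E+1\le 4E$ (for $E\ge2$), and multiplying by the $1/\alpha$ from the Markov step gives exactly the stated bound; since $c\,e^{1-c}<1$ for $c\ne1$ and $I_S<1$, both terms decay exponentially in $k$. I expect Step two to be the main obstacle: everything downstream is routine Chernoff and union bounding, but correctly pinning the scale of $\|r^e_S\|_2^2$ to $\tau_e^2$ (and disposing of the covariate means) is precisely what ties the analysis to the heterogeneity measure $I_S$. The constant $4E$ is visibly loose, an artifact of the union bounds, whereas the exponential-in-$k$ rate is the genuine content.
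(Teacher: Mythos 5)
Your proposal is correct and follows essentially the same route as the paper's proof: Markov's inequality on the conditional $p$-value yields the $1/\alpha$ factor, Lemma~\ref{lem:residualdistribution} (which you re-derive more explicitly via the annihilator projection) gives the scaled $\chi^2_k$ laws with scales $\rho^v,\rho^w$, the comparison of the two ratios is split at the deterministic threshold $\sqrt{I_S}$, and the one-sided $\chi^2$ Chernoff bounds of Lemma~\ref{lem:concentrationbirge} evaluated at $I_S^{\pm 1/4}$ produce the two exponential terms, with a union bound over $E$ for the reference ratio. The only cosmetic difference is that you use the direct bound $\mathbf{P}(X<Y)\le \mathbf{P}(X<c)+\mathbf{P}(Y>c)$ in place of the paper's Lemma~\ref{lem:appendix:probbound} (which carries an extra factor of $2$), giving a coefficient $E+1$ rather than $2E+2$, both absorbed into $4E$.
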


The proof of this result is presented in Appendix~\ref{app:proofpowerfalsenegative}. We still owe the reader a result elucidating the case $E\to\infty$, which is proven in Appendix~\ref{app:proofpowerfalsenegativeinfiniteE}:

\begin{thm} \label{thm:powerfalsenegativeinfiniteE}
    Let $S \subseteq [D]$, with $S^* \not\subseteq S$, $I_S$ as defined above, and $k\defined n-|S|$.
   To emphasize the dependence of the data on $E$ we write now $(\boldsymbol{X}_E,\boldsymbol{Y}_E)=\{(X^e,Y^e)\}_{e \in E}$. If Assumptions \ref{ass:population},\ref{ass:gaussiannoise},\ref{ass:falsenegativesfinitesample} hold then for any $\alpha \geq 0$ we have that
    \begin{equation*}
        \lim\limits_{E \to \infty} \mathbf{P}(\phi_S(\boldsymbol{X}_E,\boldsymbol{Y}_E) = 0 ) \leq \frac{1}{\alpha
        }\frac{2(I_S)^{\frac{k}{2}}}{2(I_S)^{\frac{k}{2}}+1}.
    \end{equation*}
     If we further assume the collection of random variables $\{X^{e_1}_{d_1,i_1}, X^{e_2}_{d_2,i_2}\}$ for $e_1 \in [E_1], e_2\in [E_2]$ and $i_1,i_2 \in [n]$ to be mutually independent then for any $\alpha <\frac{(I_S)^{\frac{k}{2}}}{(I_S)^{\frac{k}{2}}+1}$ it holds that
     \begin{equation*}
    \frac{1}{1-\alpha
        }\left(\frac{(I_S)^{\frac{k}{2}}}{(I_S)^{\frac{k}{2}}+1}-\alpha\right) \leq \lim\limits_{E \to \infty} \mathbf{P}(\phi_S(\boldsymbol{X}_E,\boldsymbol{Y}_E) = 0 ).
    \end{equation*}
\end{thm}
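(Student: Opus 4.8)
The plan is to reduce both inequalities to controlling a single quantity, the expected $p$-value of the test \eqref{eq:hyptest}, and then to resolve the resulting extreme-value problem as $E\to\infty$. Write $R\defined \min_{e\in[E]}Z^e_S/\max_{e\in[E]}Z^e_S$ for the data-independent reference ratio and let $p_S\defined \mathbf{P}(T_S>R\mid\boldsymbol{X},\boldsymbol{Y})$ be the conditional probability appearing in \eqref{eq:hyptest}, so that $\phi_S=0$ exactly when $p_S>\alpha$. Since $p_S\in[0,1]$, Markov's inequality gives $\mathbf{P}(\phi_S=0)=\mathbf{P}(p_S>\alpha)\le\mathbf{E}[p_S]/\alpha$, while the elementary bound $\mathbf{E}[p_S]\le\alpha+(1-\alpha)\mathbf{P}(p_S>\alpha)$ rearranges to $\mathbf{P}(p_S>\alpha)\ge(\mathbf{E}[p_S]-\alpha)/(1-\alpha)$. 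Because the $Z^e_S$ are independent of the data, $\mathbf{E}[p_S]=\mathbf{P}(R<T_S)$ over the joint randomness. Both displayed bounds then follow once I show $\limsup_E\mathbf{P}(R<T_S)\le \frac{2(I_S)^{k/2}}{2(I_S)^{k/2}+1}$ (upper bound), and, under the extra independence hypothesis, $\liminf_E\mathbf{P}(R<T_S)\ge\frac{(I_S)^{k/2}}{(I_S)^{k/2}+1}$ (lower bound). Throughout I reuse from the proof of Theorem~\ref{thm:powerfalsenegative} that $\|r^e_S\|^2$ is a central scaled $\chi^2_k$ with $k=n-|S|$ and per-group scale $\tau_1^2$ for $e\in[E_1]$ and $\tau_2^2$ for $e\in[E_2]$, where $U\defined S^*\setminus S$ and, w.l.o.g., $\tau_1^2\le\tau_2^2$ so that $I_S=\tau_1^2/\tau_2^2$.

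Both reductions hinge on the same extreme-value picture. The maxima concentrate: $\max_{e}Z^e_S$ and $\max_e\|r^e_S\|^2$ both grow like $2(\text{scale})\log E$, the residual maximum being driven by the larger scale $\tau_2^2$, so that $\max_{e}Z^e_S/\max_e\|r^e_S\|^2\to 1/\tau_2^2$ in probability; the same limit holds if the denominator is replaced by $\max_{w\in[E_2]}\|r^w_S\|^2$, since $\log(E/2)\sim\log E$. For the minima I would use that, within each group, the residuals are independent, and that raising to the power $k/2$ linearizes the $\chi^2_k$ lower tail $F_{\chi^2_k}(x)\approx c_k x^{k/2}$. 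After a common normalization the $k/2$-th powers of the three minima $\min_e Z^e_S$, $\min_{v\in[E_1]}\|r^v_S\|^2$ and $\min_{w\in[E_2]}\|r^w_S\|^2$ converge jointly to independent exponential variables $A,B_1,B_2$ with rates proportional to $2$, $\tau_1^{-k}$ and $\tau_2^{-k}$ respectively (only these proportions enter the final answer), with $A$ independent of $(B_1,B_2)$ because the $Z^e_S$ are independent of the data.

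For the upper bound I would not try to describe $T_S$ exactly, because Assumption~\ref{ass:falsenegativesfinitesample} guarantees independence only \emph{within} each group. Instead I bound the statistic one group at a time: from $\min_e\|r^e_S\|^2\le\min_{v\in[E_1]}\|r^v_S\|^2$ and $\max_e\|r^e_S\|^2\ge\max_{w\in[E_2]}\|r^w_S\|^2$ we get $T_S\le \min_{v\in[E_1]}\|r^v_S\|^2/\max_{w\in[E_2]}\|r^w_S\|^2$, whence $\{R<T_S\}\subseteq\{\min_e Z^e_S/\min_{v\in[E_1]}\|r^v_S\|^2<\max_e Z^e_S/\max_{w\in[E_2]}\|r^w_S\|^2\}$. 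The right ratio tends to $1/\tau_2^2$ in probability and the left tends in law to $(A/B_1)^{2/k}$ (type-1 minimum independent of the reference minimum), so Slutsky at the continuity point $1/\tau_2^2$ gives $\limsup_E\mathbf{P}(R<T_S)\le\mathbf{P}(\tau_2^k A<B_1)=\frac{2}{2+(I_S)^{-k/2}}=\frac{2(I_S)^{k/2}}{2(I_S)^{k/2}+1}$; this is robust to arbitrary cross-group dependence precisely because it only couples the type-1 minimum with the type-2 maximum and the independent $Z^e_S$. For the lower bound the extra hypothesis makes \emph{all} residuals independent, so $(\min_e\|r^e_S\|^2)^{k/2}$ converges to $\min(B_1,B_2)$, exponential with rate proportional to $\tau_1^{-k}+\tau_2^{-k}$; the same Slutsky argument then yields the exact limit $\mathbf{P}(R<T_S)\to\mathbf{P}(\tau_2^k A<\min(B_1,B_2))=\frac{2(I_S)^{k/2}}{3(I_S)^{k/2}+1}\ge\frac{(I_S)^{k/2}}{(I_S)^{k/2}+1}$, which fed into the reverse-Markov bound produces exactly $\frac{1}{1-\alpha}\big(\frac{(I_S)^{k/2}}{(I_S)^{k/2}+1}-\alpha\big)$.

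The main obstacle is making the extreme-value limits rigorous: establishing the joint weak convergence of the rescaled $k/2$-powered group minima to independent exponentials (cleanest via convergence of the underlying point processes near $0$ to Poisson processes with power-law intensity), the in-probability convergence of the maxima ratios, and the legitimacy of the Slutsky step at $1/\tau_2^2$. Two subtleties need care. First, one must confirm the residual norms are genuinely \emph{central} scaled $\chi^2_k$ with the stated scales, i.e.\ handle the covariate means exactly as in the proof of Theorem~\ref{thm:powerfalsenegative} (e.g.\ through the intercept reduction noted after Assumption~\ref{ass:population}), and that within a group these scales are common so that $I_S$ is well defined in the limit. Second, the upper bound must survive arbitrary cross-group dependence, which is the reason for routing it through the single-group relaxation $T_S\le\min_{[E_1]}/\max_{[E_2]}$ rather than the exact limit of $T_S$; this relaxation is also what produces the looser constant $\frac{2(I_S)^{k/2}}{2(I_S)^{k/2}+1}$ compared with the sharp value $\frac{2(I_S)^{k/2}}{3(I_S)^{k/2}+1}$ attainable under full independence.
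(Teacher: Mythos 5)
Your proposal is correct and follows essentially the same route as the paper's proof: a Markov and reverse-Markov reduction to $\mathbf{E}[p_S]=\mathbf{P}(R<T_S)$, a single-group relaxation of $T_S$ for the upper bound to sidestep cross-group dependence, concentration of the ratio of maxima, and Weibull-type extreme-value limits for the rescaled minima of chi-squared variables (the paper's Lemmas~\ref{lem:maxrationconcentrates} and~\ref{lem:cdfofminchisquare}). The only differences are cosmetic: you represent the limiting minima as independent exponentials of the $k/2$-powers rather than integrating against the limiting density, and for the lower bound you compute the exact limit $\frac{2(I_S)^{k/2}}{3(I_S)^{k/2}+1}$ before weakening it, whereas the paper homogenizes the scales inside $T$ to arrive at $\frac{(I_S)^{k/2}}{(I_S)^{k/2}+1}$ directly.
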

Comparing Theorem \ref{thm:powerfalsenegative} and \ref{thm:powerfalsenegativeinfiniteE} we make the observation that the dependence of the bound on $I_S$ and $k$ is very similar, the biggest difference being that Theorem \ref{thm:powerfalsenegative} loses a factor of $\frac{1}{4}$ in the exponent of $I_S$ compared to Theorem \ref{thm:powerfalsenegativeinfiniteE}. This may, however, very well be due to the proof technique of Theorem \ref{thm:powerfalsenegative}, in particular the use of Lemma \ref{lem:appendix:probbound}. More importantly, Theorem \ref{thm:powerfalsenegativeinfiniteE} shows that the false acceptance probability does not necessarily converge to $0$ for increasing $E$. This indicates a potentially complicated relationship between the number of available environments $E$ and the performance of L-ICP. To study this relation further, we conduct experiments in the following section.

\section{Experimental Results}\label{sec:experiments}

The code generating all results from this section is accessible through https://github.com/AlexanderMey/causal-local-linear/tree/main/UAI-code.

We now perform a range of experiments to further shed light on the performance of L-ICP under model-misspecification. Furthermore, we contrast L-ICP with joint LiNGAM \citep{grouplingam}, ICP \citep{petersinvariance} and PCMCI \citep{runge}. For the implementation of the tests $\phi_S$ in the following experiments we generate, unless stated otherwise, $B=1000$ samples from ${\min_{e\in[E]} Z^e_S} /{\max_{e\in[E]} Z^e_S}$ to compute the $p$-values for the tests in L-ICP. We generate data from a linear structural equation model with varying noise distributions and $|S^*|=2$ and $D=6$. Further details are found in Appendix~\ref{sec:datageneration}. In all experiments we generate data over $300$ independent runs, collect the estimated causal parents in each run, and report how often the method missed a causal parent (false negative rate) and how often the method returned a non-parent (false positive rate). More precisely, let $\tilde{S}_r$ be the estimate of $S^*$ in run $r$. The false negative rate is given by $\frac{1}{300}\sum_{r=1}^{300} \mathbf{1}\left\{S^*\setminus \tilde{S}_r \neq \emptyset\right\}$ and the false positive rate by $\frac{1}{300}\sum_{r=1}^{300} \mathbf{1}\left\{\tilde{S}_r\setminus S^* \neq \emptyset\right\}$. We also report error bars, which are computed as a $95$ percent Clopper-Pearson confidence interval \citep{clopperpearson}. Unless stated otherwise, all tests of L-ICP are done with target level $\alpha=0.1$.

\paragraph{Effects of Non-Normal Noise.}
To calibrate L-ICP we make the normal noise Assumption \ref{ass:gaussiannoise} and we first investigate the impact on the performance of L-ICP when this is violated. As our test is based on minimal and maximal statistics, we expect that a misspecification of the tail distribution has the biggest impact on the performance. We thus generated data with three different noise models: normal noise, for a baseline comparison, uniform noise and Student-t distributed noise, where the standard deviation of the noise is kept at $1.1$.\footnote{Note that the standard deviation for standard Student-t distribution is always larger than 1, approaching that value in the limit of the number of degrees of freedom.}  This results in approximately $11.5$ degrees of freedom for the Student-t distribution. In Figure \ref{fig:falsepositives} we show that under uniform noise, L-ICP is more conservative and under the Student-t noise it is less conservative, where the false positive rate exceeds at times the target threshold of $0.1$. The results for the false negative rates are similar and shown in Figure \ref{fig:falsenegatives}. We highlight that the performance gap of the false positive rate and the false negative rate under Student-t noise has the same reason: if the correct set $S^*$ is not accepted as a plausible set, we cannot guarantee that $\Tilde{S} \subseteq S^*$, but if $S^*$ is not a plausible set, it also becomes more likely that $S^*\not\subseteq \tilde S$. While this can be addressed by adjusting $\alpha$, as also confirmed with an additional experiment in Appendix~\ref{app:additionalexperiments}, it is not clear what the correct adjustment is, and for that we need to investigate ways to calibrate the method without the normality assumption as further discussed in Section~\ref{sec:discussion}.

\begin{figure}
    \centering
        \begin{subfigure}[t]{0.485\textwidth}
        \includegraphics[width=\textwidth]{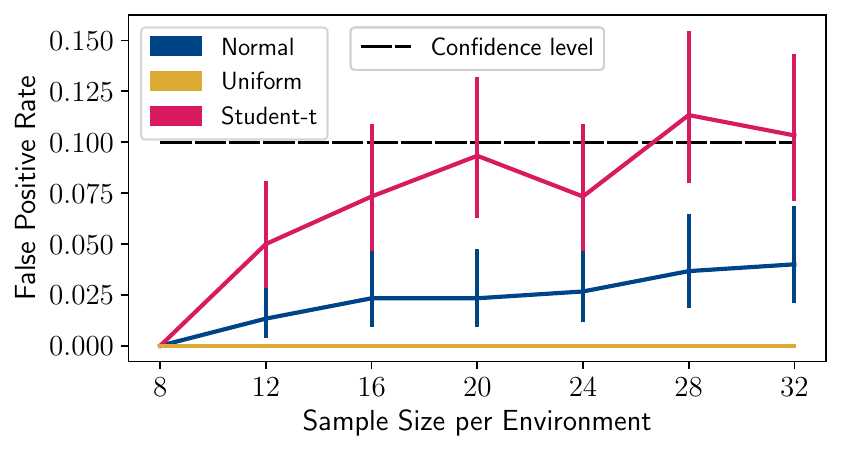}
    \caption{The behavior of the false positive rate of L-ICP under a misspecified noise model.}
    \label{fig:falsepositives}
        \end{subfigure}
        \quad
        \begin{subfigure}[t]{0.485\textwidth}
            \includegraphics[width=\textwidth]{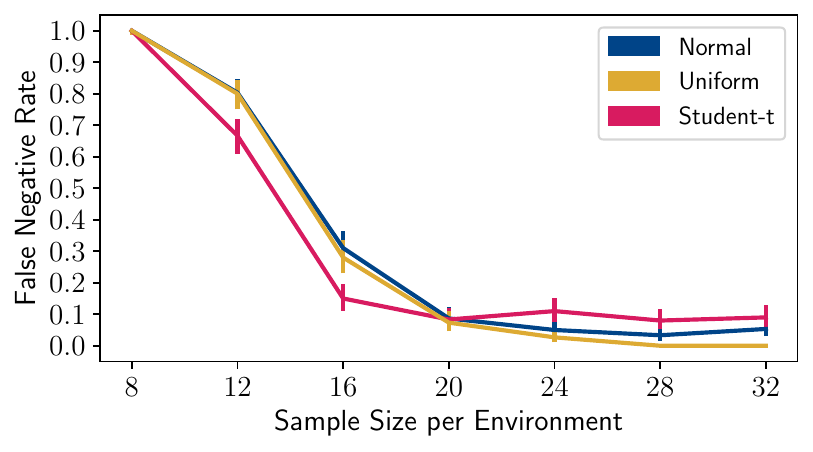}
    \caption{The behavior of the false negative rate of L-ICP under a misspecified noise model.}
    \label{fig:falsenegatives}
        \end{subfigure}
\end{figure}

\begingroup
\renewcommand{\arraystretch}{1.3} 
\begin{table}[h!]
\centering
\begin{tabular}{|l|lll|lll|}
\hline
     \ &        \multicolumn{3}{l|}{\vtop{\hbox{\strut \textbf{False Positive Rate}}}}    &                                        \multicolumn{3}{l|}{\vtop{\hbox{\strut \textbf{False Negative Rate}}}}\\ \hline
 Data  & Dense & Sparse         & ICP Violated & Dense & Sparse         & ICP Violated \\ \hline
ICP   & \textbf{0.003} & \textbf{0.007}       & \textbf{0.007}                                                 & 0.337 & \textbf{0.503}          & 0.967                                                  \\
L-ICP & 0.077 & 0.107          & 0.07                                                   & \textbf{0.243} & 0.703          & \textbf{0.033}                                                  \\ \hline
\end{tabular}
\caption{A comparison of L-ICP and ICP in settings where the heterogeneity is dense/sparse in the environments and when the structural parameters $\beta^e$ are changing across environments, and thus violating one of ICPs assumptions. }
\label{tab:icpcomparison}
\end{table}

\endgroup

\paragraph{Comparison with ICP.} As our method is an extension of ICP, we now highlight the main differences and showcase some consequences of those in three simple experiments. The main difference of ICP is that it additionally assumes that $\beta^{v}=\beta^{w}$ for all $v,w \in [E]$. This restriction of course allows for a better parameter estimation, as we may pool data, and also for different hypothesis tests. In particular they additionally test (Method II in their paper) if the mean of the residuals is identical in all environments. In our current formulation this is not meaningful, as our residuals have a vanishing mean in each environment. Furthermore, while we test for differences in the minimum and maximum, ICP loops over all environments $e$ and tests if the mean and variance of $e$ is the same as the means and variances in the other environments. They then correct for this multiple test with a Bonferroni correction.

Considering the conceptual and practical differences in ICP and L-ICP we propose three experiments to test the practical implications. The first two experiments satisfy ICPs additional restriction that $\beta^{v}=\beta^{w}$ for all $v,w \in [E]$. In the first experiment $99$ of a total $100$ environments follow the same data generation procedure, so the heterogeneity is \emph{sparse} in the environments. In the second setting the variance of the covariates is randomly sampled for $100$ environment, so the heterogeneity is \emph{dense} in the environments. In the last setting the additional restriction of ICP is \emph{violated}. While the complete description of the data generation can be found in Appendix~\ref{sec:datageneration}, the results are shown in Table \ref{tab:icpcomparison}. We notice that ICP is more conservative, leading to a false positive rate which is quite below the set threshold of $0.1$. This naturally results in a loss of power of ICP. In the dense case L-ICP achieves a lower false negative rate, even though the assumptions of ICP are entirely met. Curiously, in the sparse setting ICP achieves a lower false negative rate, despite the fact that our test targets sparse heterogeneity. This seems to be related to the specific test ICP relies on, together with pooling data from all environments to estimate the parameters $\beta$. Finally, as expected, ICP fails to produce meaningful results if the assumption that $\beta^v=\beta^w$ for all $v,w \in [E]$ is violated.

\begin{figure}[h!]
    \centering
    \begin{subfigure}[t]{0.485\textwidth}
       \includegraphics[width=\textwidth]{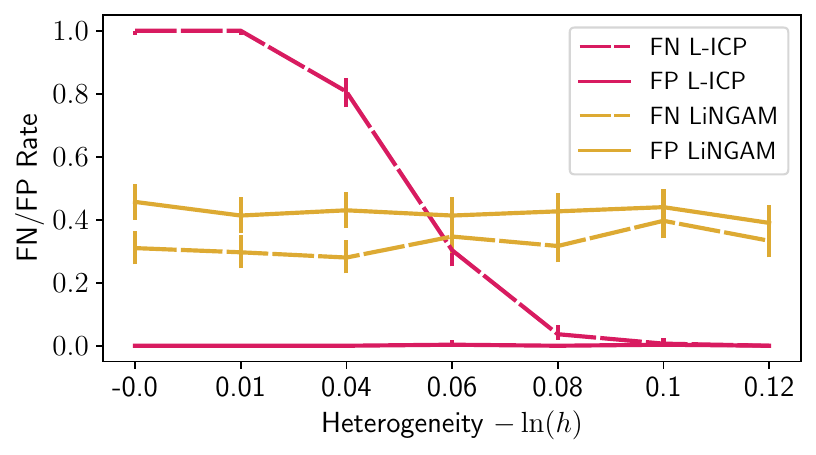}
    \caption{A comparison of joint LiNGAM and L-ICP under uniform noise.}
    \label{fig:complingamuniform}
    \end{subfigure}
    \quad
    \begin{subfigure}[t]{0.485\textwidth}
     \includegraphics[width=\textwidth]{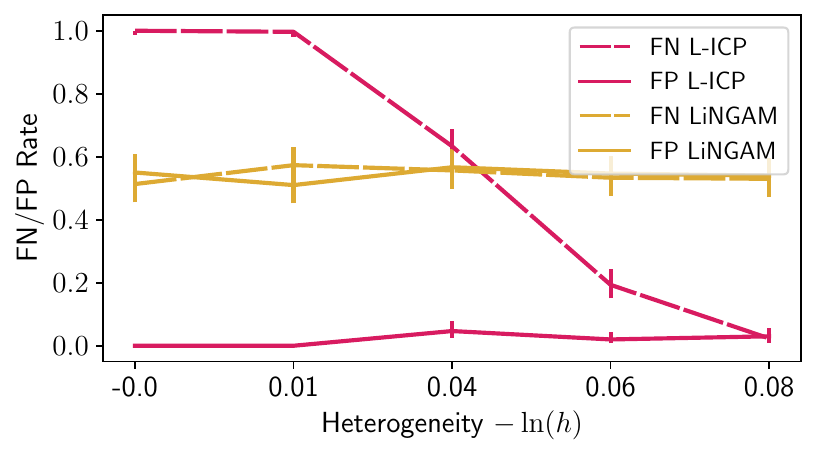}
    \caption{A comparison of joint LiNGAM and L-ICP under Gaussian noise.}
    \label{fig:complingamnormal}
    \end{subfigure}
\end{figure}

 \paragraph{Comparison with LiNGAM.}
 We now highlight the strength and pitfalls of L-ICP, while we compare its performance to a version of LiNGAM that can receive input from different environments \citep{grouplingam}.
 LiNGAM is a method also developed for causal discovery, but relies on a rather different set of assumptions: LiNGAM does not assume heterogeneity of the environments, but instead requires non-Gaussian noise variables for parent identification. On the other hand, L-ICP relies on environmental heterogeneity. We contrast the two methods by showcasing their performance on a spectrum of settings spanning both assumptions. In particular we generate data once with uniform noise, matching LiNGAMs assumptions, and once with Gaussian noise, matching L-ICPs assumptions. We introduce heterogeneity into the data by dividing $E=30$ environments into two groups that have inter-group heterogeneity but intra-group homogeneity, as this allows for a controlled way of inducing heterogeneity. The parameter $I_S$ from Theorem~\ref{thm:falsepositivefinitesample} provides a natural way to quantify the heterogeneity in the various scenarios and we define $h\defined \max\limits_{S: S^*\not\subseteq S} I_S$ as the heterogeneity parameter.\footnote{ Note that while the parameter $h$ is still meaningful, the other data generation assumptions made by Theorem \ref{thm:falsepositivefinitesample} do not hold.} Given the exponential relationship of the Theorem, we report in Figures \ref{fig:complingamuniform} and  \ref{fig:complingamnormal} the performance of both methods along the parametrization $-\ln(h) \in [0,\infty)$, so that larger values indicate stronger heterogeneity and $-\ln(h)=0$ indicates that no heterogeneity was present. As expected, without environment heterogeneity L-ICP fails to identify causal parents, while adding a moderate amount of heterogeneity eventually leads to near-optimal performance. LiNGAM is largely unaffected by changes of the heterogeneity, while its overall performance, even in the well-specified case of the uniform noise, is quite low: note that an algorithm that in each run alternates between reporting the empty set and all covariates would achieve a false positive and false negative rate of $\frac{1}{2}$, as we only count if \emph{a} false positive/negative was present. In Appendix~\ref{app:additionalexperiments} we perform the same type of experiment with a scaled Student-t distribution. The results of that experiment show that for a low degree of freedom of $3$, and thus a strong Gaussanity violation, L-ICP shows bad performance and also increased heterogeneity does not help in recovering a good performance. For a moderate degree of freedom of $10$ stronger heterogeneity does help again.

\subsection{Network Detection in Dynamical Systems}
\label{sec:experimentsapplicationlorenz}
Finally, we want to describe, and showcase, that one may use L-ICP for network detection in dynamical systems. Following the remarks after Theorem \ref{thm:falsenegative}, finding a full causal graph is possible in our proposed setting if no covariate is directly affected by the environment index, but only indirectly through changing structural parameters. In this section we simulate data from a non-linear dynamical system, which approximately follows this setting when we consider our local models as locally linear approximations of the system. In this experiment we want to reveal if the non-linearity of the system can introduce sufficient heterogeneity across time so that L-ICP can subsequently discover causal relations. More precisely, our data consists of a discrete-time and noisy version of a five-dimensional Lorenz system described by \cite{NonlinearFeedbackinaFiveDimensionalLorenzModel} together with an independently sampled random walk. The precise equations of the system can be found in Appendix~\ref{sec:lorenzfullresults}. 

  Given that we have dynamical data, we chose our environments to be time intervals of length $n$. More precisely, for a given starting time $t_0 \in \mathbb{N}$ we define the observations in environment $e_{t_0}$ by $({X}^{e_{t_0}},{Y}^{e_{t_0}})\defined \{(X^{t},Y^{t})\}_{t_0\leq t \leq t_0+n}$. The target variable $Y^t$ is now the observation of \emph{any} chosen covariate, but at the next time-step. For example, if we want to find the causal parents of $X_1$, we define $Y^t\defined X^{t+1}_1$.

  \begin{figure}
      \centering
      \begin{subfigure}[t]{0.45\textwidth}
          \includegraphics[width=\textwidth]{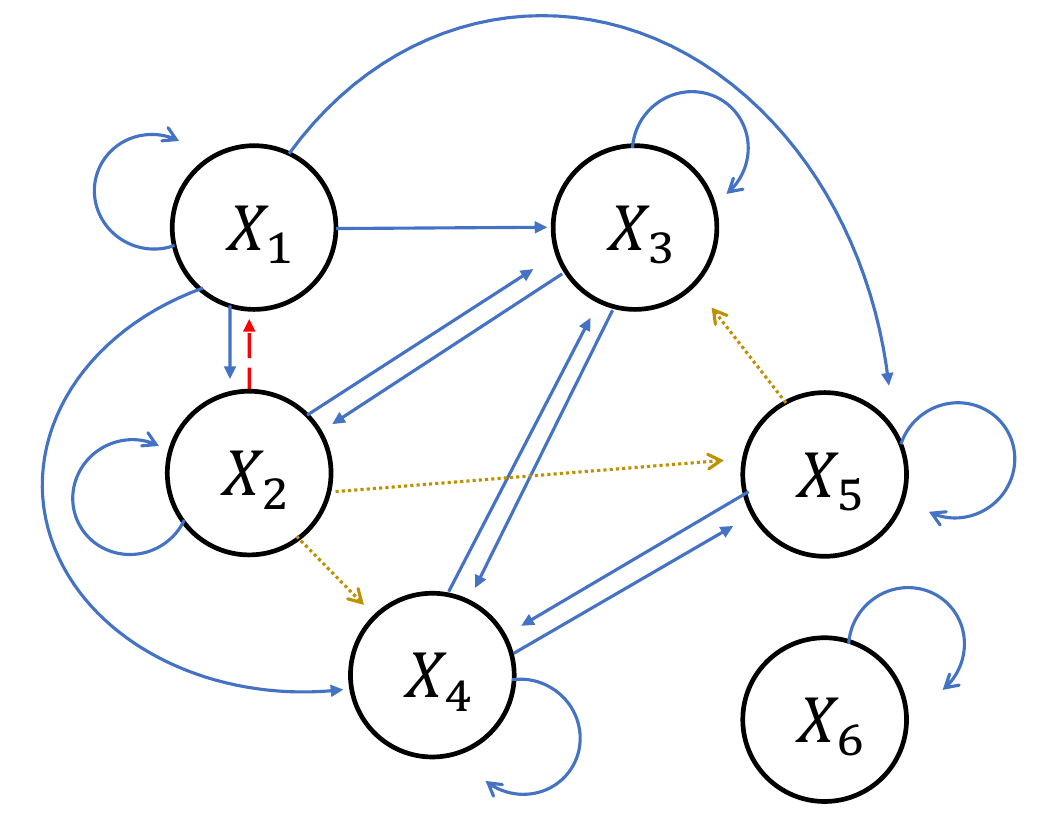}
         \caption{The reported causal graph by L-ICP, when using the confidence level of L-ICP to decide which edges to report.}
      \end{subfigure}
    \quad
          \begin{subfigure}[t]{0.45\textwidth}
          \includegraphics[width=\textwidth]{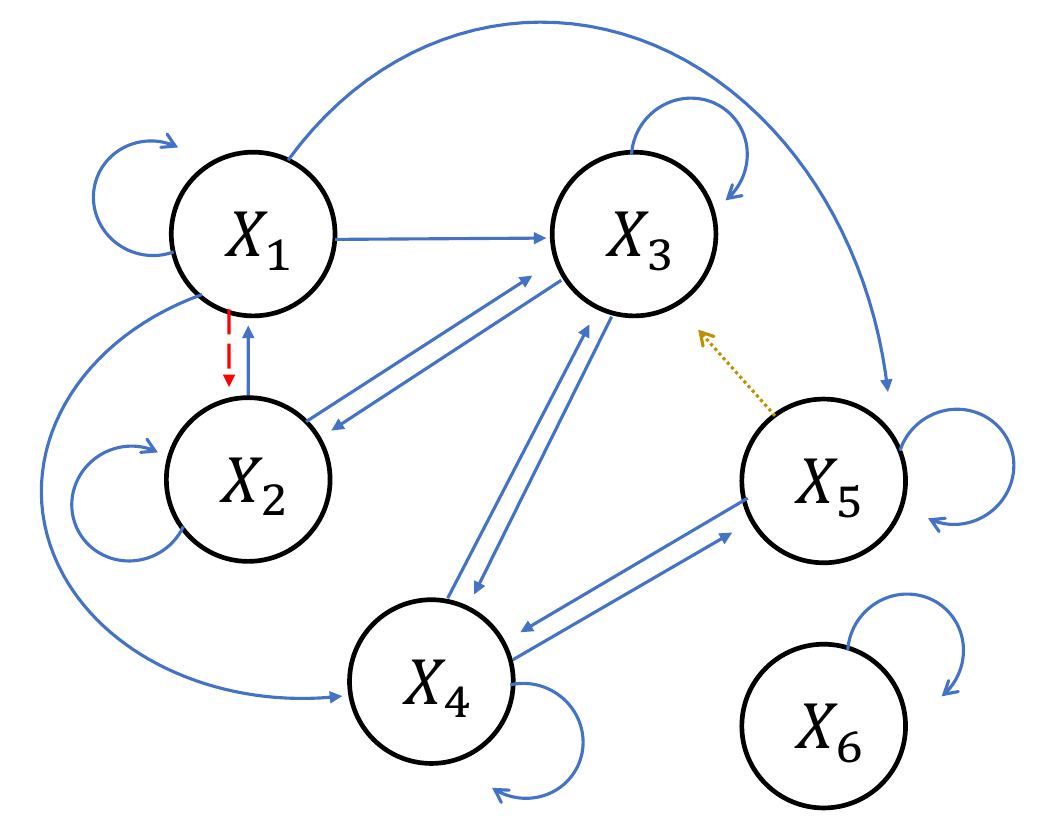}
         \caption{The reported causal graph by PCMCI, when clairvoyantly choosing a threshold to report edges (using the confidence level as for L-ICP lead to a trivial graph).}
      \end{subfigure}
      \caption{The reported graphs from L-ICP (left) and PCMCI (right). \\
         \textbf{Solid blue:} Correctly found.
         \textbf{Dashed red:} Not found (false negative discovery). \textbf{Dotted yellow:} Falsely reported (false positive discovery.)}
      \label{fig:lorenzfroundN=25}
  \end{figure}

\paragraph{Experimental details.} First, over $500$ independent runs we generated $8500$ samples of the dynamical system. Given the data of one run, we split the time series into intervals of length $n$ and then run L-ICP with $B=500$ and $\alpha=0.1$ using those intervals as environments. For that, we need to decide the interval length $n$, which we did with the following rationale: If $n$ is chosen very small, the algorithm tends to return the empty set because most subsets $S\subseteq [D]$ are plausible causal parents. If $n$ is too large, the method tends also to return the empty set, as in that case no subset of covariates provides a set of plausible causal parents due to a strong violation of the linearity assumption. We thus first tested for which sample sizes $n$ the method tends to \emph{not} return the empty set for any covariate in an individual run. Leaving the first $500$ samples as a warm-up phase for the system, and splitting the remaining data into $E=300$ intervals of length $n$, the method tended to return a non-empty output for $n \in [15,35]$. This motivated our choice to set $n =25$. Over the $500$ runs we then count for each target how often each covariate was reported as a causal parent. While the full counts are found in Appendix~\ref{sec:lorenzfullresults}, we now report a causal graph with the following reasoning: given that we run L-ICP with $\alpha=0.1$ we consider all covariates that were reported in significantly over $10\%$ of the runs as causal parents as true causal parents. The significance is tested with a binomial test at significance level $0.05$, with the null that a covariate is found in $10\%$ of the runs, and the alternative that the reported rate is greater than $10\%$. We compare this approach with PCMCI \citep{runge}, a causal discovery method targeting time-series data. PCMCI first uses conditional independence tests on lagged variables to find a graph skeleton, and then orients the edges along their temporal direction. For a fair comparison we use PCMCI with a partial correlation test, matching our linearity assumption, and in each run we perform PCMCI with a target confidence level of $0.1$ on $30$ evenly spaced intervals of length $25$. While in each run we have access to a total of $300$ intervals, we only perform PCMCI on $30$ of those for computational reasons. We then count how often each arrow was in total reported. Reporting the final results with the same reasoning as for L-ICP resulted in a fully connected graph, so we instead picked a threshold that is tuned based on the ground truth graph.

In the top of Figure \ref{fig:lorenzfroundN=25} we report the causal graph computed with L-ICP and we can affirm that non-linear relationships in a dynamical system can introduce sufficient heterogeneity across time. We find all but one connection, while we reported three incorrect edges. By Proposition \ref{thm:falsepositivefinitesample} we know that this has to be due to model misspecification, highlighting an important limitation of the linearity assumption. In comparison PCMCI (bottom of Figure \ref{fig:lorenzfroundN=25}) also found all but one connection, but only reported one incorrect edge. Given that we clairvoyantly thresholded the results of PCMCI and, in contrast to L-ICP, PCMCI can not deal with instantaneous relationships, one may consider L-ICP competitive.

\section{Discussion}\label{sec:discussion}
In this paper we presented an extension of the work from \cite{petersinvariance} to a setting where models are estimated locally in every environment, with many interesting consequences. We now discuss limitations, how they can be addressed, and possible extensions of our work.
\paragraph{Scalability of the method.} The computational cost of our current proposal scales exponentially with respect to the dimension $D$. To overcome this bottleneck, one could, instead of looping over all possible subsets of $[D]$, greedily add or remove covariates as plausible causal parents. While the greedy removal was already applied to ICP by \cite{greedyICP}, it is clear that such methodology can generally not enjoy the same guarantees as the full method, but an extensive comparison against the full method is an interesting open task. Alternatively, one can reduce the dimensionality $[D]$, for example by clustering highly correlated variables. Instead of looping over all subsets from $[D]$, one may then loop over all clusters, and the method reports clusters in which a parent is present. Finally, we envision a procedure where the test-criterion given by $\phi_S$ is encoded as an objective function that one may optimize over the set of covariates, in the spirit of procedures such as the LASSO. Similar ideas have been applied in the machine learning literature, see for example \cite{arjovskyIRM}.

\paragraph{Extension to non-Gaussian noise.} To calibrate our hypothesis test, we assume that the noise is normally distributed. While this provides a starting point to analyze a specific methodology within our proposed framework (L-ICP), we observe in Figures \ref{fig:comparisonlingamstudentt=3} and \ref{fig:comparisonlingamstudentt=10} that a violation of this assumption can lead to a strong performance loss. The normality assumption, however, is \emph{not} an integral part of the methodology, and Algorithm \ref{alg:maxmintest} can be run by replacing our hypothesis test $\phi_S(\boldsymbol{X},\boldsymbol{Y},\alpha)$ with a different one that relies on other assumptions. We could instead calibrate the test by permutation as done by \cite{stoepker2021}, e.g., by permuting the residuals over all environments and contrasting the test statistic in the permuted and unpermuted data. Alternatively, one may use Levene's test for equal variances \citep{levene}, which has robustness against non-normality and was used in a similar context by \cite{heinzedemlnonlinear}. Finally note that the distribution of the sum of squared residuals $\|r^e_{S^*}\|_2^2$ will be approximately normal when $n^e$ is large, as a consequence of the central limit theorem, which one might be able to capitalize on with modifications to our test statistic.

\paragraph{The role of locality.}
The main novelty of our proposed setting is that we model each environment separately with a local model without any additional structural assumptions between the local models. This relaxes the global linearity assumption and, in some sense, allows us to model non-linear systems as seen in Section \ref{sec:experimentsapplicationlorenz}. But more importantly, in some scenarios the data generation in different environments can truly follow different functional relationships, and a local model becomes necessary. Thinking about different car types as different environments, it is reasonable to assume that the causal relations are the same, but the structural equations are not.
\paragraph{Limitations of linearity.}
In the experiments of Section \ref{sec:experimentsapplicationlorenz} we approximated a non-linear dynamical systems with intervals of linear models. To ensure approximate linearity we had to set the length of the intervals, which corresponds to the sample size $n$ of the environments within our setting, at a relatively small number of $n=25$. In that experiment this sample size was too small to recover all causal parents, while some false discoveries are likely due the violation of the linearity assumption. While both problems highlight the need for non-linear versions of the proposed methodology, the general concept of (L-)ICP does not hinge on linearity, as also pointed out in \cite{petersinvariance}.

\paragraph{Changing distribution of the target noise.} Arguably one of the strongest assumptions in our setting is that the target noise $\varepsilon^e$ follows the same distribution in every environment. While this ensures that the environment cannot have any influence on $\varepsilon^e$, this assumption can be relaxed. We may, for example, parameterize the distribution of $\varepsilon^e$ along a parameter $\theta$ and then assume that $\theta$ and the environment indices are independently sampled from a distribution. With that assumption, the environment index is independent of $\theta$ and thus should not have any effect on the distribution of $\varepsilon^e$. Establishing that a set $S \subseteq [D]$ is plausible then can, for example, be established by testing the independence of the environment index and $\theta$.
\paragraph{Picking good environments.} In applications we may often face the choice of how to define the different environments in which our data is partitioned. In the experiment of Section \ref{sec:experimentsapplicationlorenz}, for example, we have access to a stream of dynamical data and need to split this stream in a meaningful way. While in this experiment we simply picked environments that are evenly spaced in time, one can think about more sophisticated ways of picking environments and identifying good environments should be a focus of future research. This is not only important for our own methodology, but any method that views an environment as an accidental interventions.

\paragraph{Acknowledgements}\mbox{} \\
This research has been funded by NWO under the grant PrimaVera (https://primavera-project.com) number NWA.1160.18.238 and was done in collaboration with our project partner ASML.

\bibliographystyle{abbrvnat}
\bibliography{referencesCD}

\newpage

\title{Appendix}
\maketitle
\appendix

\section{Proofs}\label{sec:proofs}
\subsection{Proof of Lemma \ref{lem:hypothesisrelation}} \label{app:proofhypothesisrelation}
\paragraph{Lemma \ref{lem:hypothesisrelation}}
\textit{If $\tilde{H}_{0,S}$ is true, then so is $H_{0,S}$.}
\begin{proof}
If $\tilde{H}_{0,S}$ is true then, using the independence of the residual noise $r^e$ and covariates, we know that $\gamma^e$ as well as $\tilde{\beta}^e_S$ are solutions of the least squares problem
\begin{equation*}
    \min\limits_{\beta}\mathbf{E}\left[\sum_{i=1}^{n^e}(X^e_{S,i} \beta -Y^e_i)^2\right].
\end{equation*}
This means that  $X^e_{S,i} \tilde{\beta}=X^e_{S,i} \gamma^e$ almost surely, which implies the lemma.
\end{proof}
\subsection{Proof of Proposition \ref{thm:falsepositivefinitesample}} 
\label{app:prooffalsepositivefinitesample}
\begin{proof}
    If for all $e\in[E]$ the matrix $(X^e_{S^*})^TX^e_{S^*}$ is not invertible we are done, as in that case $\Tilde{S}=\emptyset$. If this matrix is invertible, then we know by Assumptions \ref{ass:population} and \ref{ass:gaussiannoise} that ${r^e_{S^*} \sim \sigma^2_Y \chi^2(n^e-\text{rank}((X_{S^*}^e)^T X_{S^*}^e))}$. With that $T_{S^*}=\min_{e\in [E]} \|r^e_{S^*}\|_2^2 / \max_{e\in [E]} \|r^e_{S^*}\|_2^2$ and ${\min_{e\in[E]} Z^e_S} / {\max_{e\in[E]} Z^e_S}$ follow the same distribution by definition of $Z^e_S$, which implies that
$$\mathbf{P}\left(\mathbf{P}(T_{S^*}(\boldsymbol{X},\boldsymbol{Y})>\frac{\min_{e\in[E]} Z^e_S}{\max_{e\in[E]} Z^e_S} \mid \boldsymbol{X},\boldsymbol{Y})\leq \alpha\right)\leq \alpha.$$
By definition of $\phi_{S^*}$ this means that $\mathbf{P}(\phi_{S^*}=1) \leq \alpha$, which by definition of $\tilde{S}$ finally implies that $\mathbf{P}(\tilde{S} \not \subseteq S) \leq \alpha.$
\end{proof}
\subsection{Proof of Theorem \ref{thm:falsenegative}} 
\label{app:prooffalsenegative}
\begin{proof}
The general proof strategy is the following: From the two distinct environments $e,v \in [E]$ we pick a sample $i \in [n^e]$ and $j \in [n^v]$. If the hypothesis $H_{0,S}$ would be true we may conclude that the population residuals $r^e_i$ and $r^v_j$ have the same distribution, and therefore the same variance. We then show, however, that there exists a $u \in S^*$ and $u_1 \in [D]$ such that $\mathbf{V}[r^e_i]$ is a proper rational function (so the ratio of two polynomials) of finite degree with respect to $\beta^e_u$ for almost all choices of $\Delta^e_{u_1,i}$. Fixing all entries of $(\beta^e,\beta^v,\Delta^e)$ except $\beta^e_u$ and $\Delta^e_{u_1,i}$ at arbitrary values, we conclude that the equation $\mathbf{V}[r^e_i]=\mathbf{V}[r^v_j]$ can be solved for at most finitely many values of $\beta^e_u$ and $\Delta^e_{u_1,i}$. This means that $M_0$, the solution space of the equation $\mathbf{V}[r^e_i]=\mathbf{V}[r^v_j]$ with respect to $(\beta^e,\beta^v,\Delta^e)$, has Lebesgue measure zero. This finally implies that the hypothesis $H_{0,S}$ is false for all parameter choices outside of $M_0$.

To start the proof, we note that $\mathbf{V}[r^e_i]$ is always a rational function of polynomials of finite degree with respect to the entries of $\beta^e$. On the one hand this follows from Assumption \ref{ass:sufficienthetero}, ensuring that every covariate $d$ with $y \in \textbf{AN}(d)$ is a polynomial in $Y^e$ and thus a polynomial in $\beta^e$. On the other hand we know from \cite{pseudoinverse} that the Moore-Penrose inverse has a closed form solution consisting only of elementary operations. The rest of the proof is concerned with showing that there is at least one $u \in S^*$ such that $\mathbf{V}[r^e_i]$ is a proper polynomial with respect to $\beta^e_u$, meaning that the leading term in $\beta^e_u$ does not vanish. To simplify notation, we define
\begin{equation*}
P(S,S^*)\defined \mathbf{E}[(X^e_S)^tX^e_S)]^{\dagger}\mathbf{E}[(X^e_S)^tX^e_{S^*}] \in \mathbb{R}^{|S| \times |S^*|}
\end{equation*}
and
\begin{equation*}
    P(\varepsilon^e)\defined \mathbf{E}[(X^e_S)^tX^e_S)]^{\dagger}\mathbf{E}[(X^e_S)^t\varepsilon^e] \in \mathbb{R}^{|S|}
\end{equation*}
and note that $\tilde{\beta}^e_S=P(S,S^*)\beta^e_{S^*}+P(\varepsilon^e)$. We split the proof into two cases. \\

\textbf{The first case} assumes that no variable in $S$ is a descendant of $Y$. In that case we pick any $u \in S^*\setminus S$ and split the variance of the residual as follows:

\begin{align}
    \lefteqn{\mathbf{V}[r^e_i]}\nonumber\\
    &=\mathbf{V}[X^e_{S^*,i} \beta^e_{S^*}+\varepsilon^e_i-X^e_{S,i} \tilde{\beta}^e_S ]=\mathbf{V}[(X^e_{S^*,i}-X^e_{S,i}P(S,S^*)) \beta^e_{S^*}+\varepsilon^e_i-X^e_{S,i}P(\varepsilon^e) ] \nonumber \\
   & =\mathbf{V}\Biggl[(X^e_{u,i}-X^e_{S,i}P(S,S^*)_{\cdot,u})\beta^e_u \nonumber \\
   &\quad + \sum_{d \in S^* \setminus \{u\} }(X^e_{d,i}-X^e_{S,i}P(S,S^*)_{\cdot,d}) \beta^e_d+\varepsilon^e_i-X^e_{S,i}P(\varepsilon^e)\Biggr] \nonumber \\ \label{eq:proofvar1}
   &=(\beta^e_u)^2 \mathbf{V}[(X^e_{u,i}-X^e_{S,i}P(S,S^*)_{\cdot,u}) ] \\ \label{eq:proofvar2}
   &\quad + \mathbf{V}\left[\sum_{d \in S^* \setminus \{u\}} (X^e_{d,i}-X^e_{S,i}P(S,S^*)_{\cdot,d}) \beta^e_d+\varepsilon^e_i-X^e_{S,i}P(\varepsilon^e)\right] \\ \label{eq:proofcov}
   &\quad +2 \beta^e_u \mathbf{C}\left[(X^e_{u,i}-X^e_{S,i}P(S,S^*)_{\cdot,u})\ , \sum_{d \in S^* \setminus \{u\}} (X^e_{d,i}-X^e_{S,i}P(S,S^*)_{\cdot,d}) \beta^e_d+\varepsilon^e_i-X^e_{S,i}P(\varepsilon^e)\right]\ .
\end{align}

We now argue that the variance in line \eqref{eq:proofvar1} does not vanish (\textbf{Claim 1} further below), and that the covariance in line \eqref{eq:proofcov} is only linear in $\beta^e_u$ (\textbf{Claim 2} further below). With that, and noting that the variance term in line \eqref{eq:proofvar2} is always positive, we know that there are constants $a>0$ and $b \in \mathbb{R}$ (which depend on the distributions of the covariates) such that
\begin{equation} \label{eq:proofvarinequality}
    \mathbf{V}[r^e_i] \geq (\beta^e_u)^2 a+ \beta^e_u b.
\end{equation}
With that $\mathbf{V}[r^e_i]$ is a proper polynomial in $\beta^e_u$, as $\mathbf{V}[r^e_i] \to \infty$ for $\beta^e_u \to \infty$.
\paragraph{Claim 1.} The variance in line \eqref{eq:proofvar1} does not vanish. Proof: Let $\Bar{S} \subset S$ be the set of all indices $d \in S$ with $P(S,S^*)_{d,u} \neq 0$. All indices outside $\Bar{S} \cup \{u\}$ are irrelevant as the corresponding covariate vanishes within the variance term \eqref{eq:proofvar1}. Now let $u_0$ be a sink node in $\Bar{S} \cup \{u\}$, which implies by Assumption \ref{ass:sufficienthetero} that $\Delta^e_{u_0,i}$ is independent of all other covariates with index in $\Bar{S} \cup \{u\}$. First, let $u_0 \in \Bar{S}$, then we may split the variance from line \eqref{eq:proofvar1} as
\begin{align*}
    \lefteqn{\mathbf{V}[(X^e_{u,i}-X^e_{S,i}P(S,S^*)_{\cdot,u}) ]}\\
    &=\mathbf{V}[X^e_{u,i}-X^e_{S,i}P(S,S^*)_{\cdot,u}+\delta^e_{u_0,i} P(S,S^*)_{u_0,u} ]+\mathbf{V}[\delta^e_{u_0,i}]P(S,S^*)^2_{u_0,u}.
\end{align*}
This splitting is allowed as by design $\delta^e_{u_0,i}$ is independent of all covariates $X^e_{d,i}$ within the variance with $d \neq u_0$, and the dependence on $X^e_{u_0,i}$ is canceled by the term $+\delta^e_{u_0,i} P(S,S^*)_{u_0,u}$.
From the definition above we know that $P(S,S^*)_{u_0,u}\neq 0$ and by Assumption \ref{ass:sufficienthetero} we have $\mathbf{V}[\delta^e_{u_0,i}]>0$, which implies \textbf{Claim 1}. The case that $u_0=u$ follows analogously by splitting $\delta^e_{u,i}$ out of the variance.

\paragraph{Claim 2.} The covariance in line \eqref{eq:proofcov} is only linear in $\beta^e_u$. Proof: This is a direct consequence from the fact that no covariate is a descendant of $Y^e$, which also implies that no covariate has a dependence on $\beta^e_u$. The covariance does then not depend on $\beta^e_u$, and we only have a linear dependency in line \eqref{eq:proofcov} from the leading coefficient. \\

\textbf{The second case} assumes there exists at least one $d \in S$ with $d \in \textbf{DE}(y)$, where by acyclicity of $G$ we know that $d \not \in S^*$. Let $u \in S^* \setminus S$ and we fix all entries of $(\beta^e,\beta^v)$, except $\beta^e_u$. We now show that there is an index $u_1 \in [D]$ such that for almost all values of $\Delta^e_{u_1,i}$ the variance term $\mathbf{V}[r^e_i]$ is different for $\beta^e_u=0$ and $\beta^e_u \to \infty$. This implies that $\mathbf{V}[r^e_i]$ is for almost all values of $\Delta^e_{u_1,i}$ a ratio of proper polynomial in $\beta^e_u$ and the argumentation follows as in the first case. To make this argumentation formal we use the following two claims:

\paragraph{Claim 3.} For $\beta^e_u=0$ and any $u_0 \in \textbf{DE}(y)$ the term $\mathbf{V}[r^e_i]$ is finite for $\Delta^e_{u_0,i} \to \infty$. Proof: We just have to note that $u_0 \not \in S^*$, which implies that $\mathbf{V}[Y^e_i]$ remains finite for $\Delta^e_{u_0,i} \to \infty$. The claim follows by noting that $\mathbf{V}[r^e_i] \leq \mathbf{V}[Y^e_i]+\mathbf{E}[Y^e_i]^2$.
\paragraph{Claim 4.} For all $u_0 \in \textbf{DE}(y)$ set $\Delta^e_{u_0,i}=q $, then for $q \to \infty$ the term $\mathbf{V}[r^e_i]$ diverges for $\beta^e_u \to \infty$. Proof: We may assume that there exists at least one $u_0 \in \textbf{DE}(y)$ such that $\tilde{\beta}^e_{u_0} \neq 0$ for $q \to \infty$, otherwise we are back to the first case as we then may remove all descendants of $y$ from our set $S$. In this first case we have shown \textbf{Claim 4} already after Inequality \eqref{eq:proofvarinequality}. Without loss of generality let $u_0 \in \textbf{DE}(y)$ be a sink node, which implies that $\delta_{u_0}$ is independent of all other variables by \eqref{ass:independencenondescendants}. Then we may split $\mathbf{V}[r^e_i]$ as
\begin{align*}
    \mathbf{V}[r^e_i] &= \mathbf{V}[Y-X^e_S \tilde{\beta}^e_S+\delta^e_{u_0,i}\tilde{\beta}^e_{u_0}-\delta^e_{u_0,i}\tilde{\beta}^e_{u_0}] \\
    &= \mathbf{V}[Y-X^e_S \tilde{\beta}^e_S+\delta^e_{u_0,i}\tilde{\beta}^e_{u_0}]+\mathbf{V}[\delta^e_{u_0,i}](\tilde{\beta}^e_{u_0})^2.
\end{align*}
Since by assumption $\tilde{\beta}^e_{u_0}\neq 0$ we observe that $\mathbf{V}[r^e_i] \to \infty$ for $\mathbf{V}[\delta^e_{u_0,i}]=\Delta^e_{u_0,i} \to \infty$.

The above \textbf{Claim 3} and \textbf{Claim 4} together imply that $\mathbf{V}[r^e_i]$ obtain different values for $\beta^e_u=0$ and $\beta^e_u \to \infty$ in the regime that $\Delta^e_{u_0,i} \to \infty$ for all $u_0 \in \textbf{DE}(y)$. This implies that in this regime the term $\mathbf{V}[r^e_i]$ is a proper rational function in $\beta^e_u$. It could, however, still happen that for specific choices of $\Delta^e_{\cdot,i} \in (0,\infty)^D$ the rational function dependence on $\beta^e_u$ cancels within $\mathbf{V}[r^e_i]$. More precisely, let $c$ be the leading coefficient of the polynomial term in $\beta^e_u$, then it is still possible that $c=0$ for specific choices of $\Delta^e_{\cdot,i}$ as $c$ generally depends on those terms. However, let $u_1 \in [D]$ be any index such that $c$ depends on $\Delta^e_{u_1,i}$. Fixing all entries in $\Delta^e_{\cdot,i}$ except $\Delta^e_{u_1,i}$ we know that $c=0$ for at most finitely many choices of $\Delta^e_{u_1,i}$ since $\mathbf{V}[r^e_i]$ is also a rational function in $\Delta^e_{u_1,i}$.

We thus have shown that there exists a $u \in S^* \setminus S$ such that $\mathbf{V}[r^e_i]$ is a proper rational function in $\beta^e_u$ for almost all values of $\Delta^e_{\cdot,i}$. With that we know that for almost all values of $\Delta^e_{\cdot,i}$ the equation $\mathbf{V}[r^e_i]=\mathbf{V}[r^v_i]$ can be solved for at most finitely many choices of $\beta^e_u$. With that, the solution space $M_0$ of the equation $\mathbf{V}[r^e_i]=\mathbf{V}[r^v_i]$ with respect to the parameters $(\beta^e,\beta^v,\Delta^e)$ has a Lebesgue measure of zero. As the equality of $\mathbf{V}[r^e_i]$ and $\mathbf{V}[r^v_i]$ is a necessary condition for $H_{0,S}$ to be true, we know that $H_{0,S}$ is false for all parameter values outside of $M_0$.

\end{proof}

\subsection{Proof of Theorem \ref{thm:powerfalsenegative}}
\label{app:proofpowerfalsenegative}
We first collect some lemmas that are needed for the main proof.

\begin{lem} \label{lem:appendix:probbound}
For any two random variables $X,Y$ and $c>0$ it holds that
\begin{equation*}
    \mathbf{P}(X<Y)\leq 2 (\mathbf{P}(X<c)+\mathbf{P}(Y>c)).
\end{equation*}
\end{lem}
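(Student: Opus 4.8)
The plan is to reduce the probabilistic statement to an elementary deterministic containment of events, and then invoke the union bound. The key observation is a purely pointwise one: on any outcome where $X<Y$ holds, it is impossible to simultaneously have $X\geq c$ and $Y\leq c$. Indeed, if both $X\geq c$ and $Y\leq c$ were to hold, then $X\geq c\geq Y$ would force $X\geq Y$, contradicting $X<Y$. Consequently at least one of the events $\{X<c\}$ or $\{Y>c\}$ must occur, which gives the event inclusion
\begin{equation*}
    \{X<Y\}\subseteq \{X<c\}\cup\{Y>c\}.
\end{equation*}

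From here I would simply apply monotonicity of probability together with subadditivity (the union bound) to obtain
\begin{equation*}
    \mathbf{P}(X<Y)\leq \mathbf{P}\bigl(\{X<c\}\cup\{Y>c\}\bigr)\leq \mathbf{P}(X<c)+\mathbf{P}(Y>c).
\end{equation*}
Since probabilities are nonnegative, the right-hand side is trivially bounded above by $2(\mathbf{P}(X<c)+\mathbf{P}(Y>c))$, which yields the claimed inequality. In fact this argument proves a slightly stronger statement without the factor of $2$; the looser constant in the lemma presumably leaves room for the boundary conventions (strict versus non-strict inequalities at the threshold $c$) in the places where it is later applied, and costs nothing here.

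I do not anticipate any real obstacle in this proof. The only point requiring a little care is bookkeeping of the strict and non-strict inequalities at the cutoff $c$: one must verify that the complementary event $\{X\geq c\}\cap\{Y\leq c\}$ is indeed disjoint from $\{X<Y\}$, which is exactly the deterministic step above and hinges on transitivity of $\geq$. No integrability, independence, or distributional assumptions on $X$ and $Y$ are needed, so the result holds for arbitrary random variables and any fixed $c>0$.
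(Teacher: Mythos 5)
Your proof is correct, and it takes a genuinely different and in fact sharper route than the paper. The paper partitions the event $\{X<Y\}$ into three disjoint pieces according to the positions of $X$ and $Y$ relative to the threshold $c$ (both below $c$, both above $c$, or $X<c\wedge Y>c$), bounds the first two pieces by $\mathbf{P}(X<c)$ and $\mathbf{P}(Y>c)$ respectively, and then bounds the third piece $\mathbf{P}(X<c\wedge Y>c)$ by a further application of inclusion--exclusion, which is where the extra factor of $2$ enters. Your argument instead observes the single event inclusion $\{X<Y\}\subseteq\{X<c\}\cup\{Y>c\}$ (the contrapositive being that $X\geq c$ and $Y\leq c$ together force $X\geq Y$) and applies the union bound once, yielding $\mathbf{P}(X<Y)\leq \mathbf{P}(X<c)+\mathbf{P}(Y>c)$ with no factor of $2$. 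This is both shorter and strictly stronger; your version also sidesteps a minor boundary issue in the paper's decomposition, which does not cover outcomes with $X=c$ or $Y=c$ and so does not quite exhaust $\{X<Y\}$ as written. Since the lemma is invoked in the proof of Theorem~\ref{thm:powerfalsenegative} at the step leading to inequality \eqref{eq:thmsplittbylemma}, your sharper constant would propagate to improve the prefactor $\frac{4E}{\alpha}$ in that theorem to $\frac{2E}{\alpha}$, though this does not affect the qualitative exponential-decay conclusion.
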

\begin{proof}
    First we define the three events
    \begin{align*}
        & A=\{(X<c \land Y<c)\land (X<Y)\} \\
        & B= \{(X>c \land Y>c)\land (X<Y))\} \\
        & C=\{X<c \land Y>c\}.
    \end{align*}
    With that we may conclude that
    \begin{align*}
        \mathbf{P}(X<Y)&
        =\mathbf{P}( A \lor B \lor C)= \mathbf{P}(A)+\mathbf{P}(B)+\mathbf{P}(C) \\
        & \leq   \mathbf{P}(X<c)+\mathbf{P}(B)+\mathbf{P}(C) \\
        & \leq
        \mathbf{P}(X<c)+\mathbf{P}(Y>c)+\mathbf{P}(X<c \land Y>c) \\
        &= \mathbf{P}(X<c)+\mathbf{P}(Y>c)+\mathbf{P}(X<c)+\mathbf{P}(Y>c)-\mathbf{P}(X<c \lor Y>c) \\
       & \leq 2(\mathbf{P}(X<c)+\mathbf{P}(Y>c)).
    \end{align*}
\end{proof}

\begin{lem}[Adapted from \cite{dasguptachisquare}, Lemma 2.2] \label{lem:concentrationbirge}
Let $Z\sim \chi^2(k)$ and $F_Z$ be the cumulative distribution function of $Z$. Then then following two inequalities hold:
    \begin{align}
        \label{eq:dasgupta1}
        &  1-F_Z(kz)\leq (z)^{\frac{k}{2}}e^{\frac{k}{2}(1-z)} \text{\ for $z>1$ } \\
        \label{eq:dasgupta2}
        &  F_Z(kz)\leq (z)^{\frac{k}{2}}e^{\frac{k}{2}(1-z)} \text{\ for $0<z<1$ }.
    \end{align}
\end{lem}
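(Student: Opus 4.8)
The plan is to derive both inequalities as Chernoff bounds with the optimal exponential tilting parameter computed in closed form. The starting point is the moment generating function of a chi-squared variable: for $Z \sim \chi^2(k)$ one has $\mathbf{E}[e^{tZ}] = (1-2t)^{-k/2}$ whenever $t < 1/2$. I would treat the two inequalities separately, but via the same symmetric device.

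For the upper tail \eqref{eq:dasgupta1}, assume $z > 1$. For any $t \in (0, 1/2)$, the exponential Markov inequality gives
\[
1 - F_Z(kz) = \mathbf{P}(Z > kz) \leq e^{-tkz}\,\mathbf{E}[e^{tZ}] = e^{-tkz}(1-2t)^{-k/2}.
\]
The exponent $-tkz - \tfrac{k}{2}\log(1-2t)$ is convex in $t$ on $(0,1/2)$, so I minimize it by setting its derivative $-kz + \tfrac{k}{1-2t}$ to zero, which yields $1-2t = 1/z$, i.e. $t = \tfrac{z-1}{2z} \in (0,1/2)$. Substituting back (so that $(1-2t)^{-k/2} = z^{k/2}$ and $-tkz = -\tfrac{k}{2}(z-1)$) collapses the bound to $z^{k/2}e^{(k/2)(1-z)}$, which is exactly \eqref{eq:dasgupta1}.

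For the lower tail \eqref{eq:dasgupta2}, assume $0 < z < 1$. Applying the same idea to $-Z$: for any $t > 0$,
\[
F_Z(kz) = \mathbf{P}(-Z > -kz) \leq e^{tkz}\,\mathbf{E}[e^{-tZ}] = e^{tkz}(1+2t)^{-k/2}.
\]
Minimizing the convex exponent $tkz - \tfrac{k}{2}\log(1+2t)$ over $t > 0$ gives $1+2t = 1/z$, i.e. $t = \tfrac{1-z}{2z} > 0$, and substituting back again yields $z^{k/2}e^{(k/2)(1-z)}$, matching \eqref{eq:dasgupta2}.

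The computations are routine; the only points requiring care are purely bookkeeping. First, one must check that the minimizing $t$ lands in the admissible range where the MGF is finite ($t < 1/2$ in the upper-tail case, $t > 0$ in both), which the formulas $t = \tfrac{z-1}{2z}$ and $t = \tfrac{1-z}{2z}$ satisfy precisely because of the sign constraints on $z-1$. Second, one should note that the log of each bound is convex in $t$ (the second derivative of $\pm\tfrac{k}{2}\log(1\mp 2t)$ is strictly positive), so the stationary point is genuinely the global minimum rather than a saddle. Since this is a known result (as the attribution indicates), I expect no real obstacle; the only thing to get right is the algebraic substitution that produces the clean closed form $z^{k/2}e^{(k/2)(1-z)}$ in both regimes.
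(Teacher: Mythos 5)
Your proof is correct and follows essentially the same route as the paper: a Chernoff (exponential Markov) bound using the chi-squared moment generating function, with the same tilting parameter $t=\tfrac{1-z}{2z}$ (the paper plugs this value in directly rather than deriving it as the optimizer, and writes $Z$ as a sum of squared standard normals, which gives the identical MGF factor $(1\pm 2t)^{-k/2}$). The only cosmetic difference is that you work out both tails explicitly while the paper proves the lower tail and notes the upper tail follows similarly.
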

\begin{proof}
    We begin with the second inequality. For $1 \leq i \leq k$ let $X_i \sim \mathcal{N}(0,1)$ be $k$ independent standard normal random variables. Then $Z\defined \sum^k_{i=1} X_i^2 \sim \chi^2(k)$. We use a Chernoff bounding technique as follows and for $t>0$ we derive
    \begin{align*}
        F_Z(kz)& =\mathbf{P}(kz-\sum^k_{i=1}X_i^2 \geq 0)=\mathbf{P}(e^{tkz-t\sum^k_{i=1}X_i^2} \geq 1) \\
        & \leq e^{tkz}\mathbf{E}\left[e^{-t\sum^k_{i=1}X_i^2} \right]=e^{tkz}\mathbf{E}\left[e^{-tX_i^2} \right]^k.
    \end{align*}
    Using the known equality $\mathbf{E}\left[e^{-tX_i^2}\right]=(1+2t)^{-\frac{1}{2}}$ for $-\frac{1}{2}<t<\infty$ we may set $t=\frac{1}{2}\frac{1-z}{z}$ since $z<1$ to obtain
    \begin{equation*}
        e^{tkz}\mathbf{E}\left[e^{-tX_i^2} \right]^k=e^{tkz}(1+2t)^{-\frac{k}{2}}=e^{\frac{k}{2}(1-z)}z^{\frac{k}{2}}.
    \end{equation*}
    The first inequality of the lemma follows similarly.
\end{proof}

 \begin{lem} \label{lem:residualdistribution}
Let $S\subset [D]$ with $S^* \not \subseteq S$ and set $U\defined S \setminus S^*$. Then, under Assumptions \ref{ass:population}, \ref{ass:gaussiannoise} and \ref{ass:falsenegativesfinitesample} we have the following properties of the SSR when regressing ${Y}$ onto ${X}_S$ in environments $v \in [E_1]$ and $w \in [E_2]$. Defining $\rho^v\defined \sum_{u \in U}(\beta^1_u)^2(\sigma^1_u)^2+(\sigma_Y)^2$ and $\rho^w\defined \sum_{u \in U}(\beta^2_u)^2(\sigma^1_u)^2+(\sigma_Y)^2$ it holds that $\frac{1}{\rho^v}\|r^v_S\|_2 \sim \chi^2(n-|S|)$ and $\frac{1}{\rho^w}\|r^w_S\|_2 \sim \chi^2(n-|S|)$. Here $r^v_S$ and $r^w_S$ are the residuals defined in Algorithm \ref{alg:maxmintest}.
\end{lem}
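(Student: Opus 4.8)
The plan is to treat the two environment types symmetrically and reduce the claim to the classical fact that, for a symmetric idempotent matrix $P$ of rank $r$ and a standard Gaussian vector $Z\sim N(0,I_n)$, one has $Z^\top P Z\sim\chi^2(r)$. I will carry out the argument for a fixed $v\in[E_1]$; the case $w\in[E_2]$ is identical with $\beta^1,\sigma^v,\rho^v$ replaced by $\beta^2,\sigma^w,\rho^w$. I also read the statement as concerning the squared norm $\|r^v_S\|_2^2$ and the set $U=S^*\setminus S$ (consistent with the definition of $\rho^v$, since only the coordinates of $S^*$ that are \emph{missing} from $S$ carry nonzero coefficients $\beta^1_u$).

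First I would write the residual as a projection. By the Remark preceding the lemma we may assume $(X^v_S)^\top X^v_S$ is invertible, so $r^v_S=(I_n-H^v_S)Y^v$ with $H^v_S\defined X^v_S((X^v_S)^\top X^v_S)^{-1}(X^v_S)^\top$ the orthogonal projection onto the column space of $X^v_S$; this projection has rank $|S|$, so $I_n-H^v_S$ has rank $n-|S|$. Using Assumption~\ref{ass:population} I decompose $Y^v=X^v_{S^*\cap S}\beta^1_{S^*\cap S}+X^v_U\beta^1_U+\varepsilon^v$. Since the columns indexed by $S^*\cap S$ lie in the column space of $X^v_S$, the projection annihilates the first term, leaving $r^v_S=(I_n-H^v_S)\,W^v$ with $W^v\defined X^v_U\beta^1_U+\varepsilon^v$.

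The key step is to identify the conditional law of $W^v$ given the design $X^v_S$. Here Assumption~\ref{ass:falsenegativesfinitesample} does the work: the covariates $\{X^v_u\}_{u\in U}$ are sampled independently of $X^v_S$, and by Assumption~\ref{ass:gaussiannoise} the vector $\varepsilon^v$ is independent of all covariates, so conditioning on $X^v_S$ does not alter the distribution of $W^v$. Each coordinate $W^v_i=\sum_{u\in U}X^v_{u,i}\beta^1_u+\varepsilon^v_i$ is then a sum of independent Gaussians, giving i.i.d. entries of variance $\sum_{u\in U}(\beta^1_u)^2(\sigma^v_u)^2+(\sigma_Y)^2=\rho^v$, that is, $W^v\mid X^v_S\sim N(m^v\mathbf{1},\rho^v I_n)$ with constant $m^v=\sum_{u\in U}\mu^v_u\beta^1_u$. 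Taking the covariates to be centered (equivalently, including a constant column in the regression, as discussed after Assumption~\ref{ass:population}), so that $m^v=0$ or $(I_n-H^v_S)\mathbf{1}=0$, the vector $Z\defined W^v/\sqrt{\rho^v}$ is standard normal and $\|r^v_S\|_2^2/\rho^v=Z^\top(I_n-H^v_S)Z\sim\chi^2(n-|S|)$. As this conditional distribution does not depend on the realized value of $X^v_S$, it is also the unconditional distribution, which is the claim.

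I expect the main obstacle to be precisely this conditioning and mean bookkeeping: one must verify that the effective noise $W^v$ is conditionally i.i.d. Gaussian with variance \emph{exactly} $\rho^v$, which hinges on the full mutual independence of the covariates built into Assumption~\ref{ass:falsenegativesfinitesample}, and one must ensure that the resulting chi-squared is central rather than noncentral, which is why the deterministic mean $m^v\mathbf{1}$ must be removed (through centering or an intercept). Everything else reduces to the standard distribution theory of Gaussian quadratic forms.
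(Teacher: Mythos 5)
Your proof is correct and follows essentially the same route as the paper's: identify the effective noise $X^v_U\beta^1_U+\varepsilon^v$ as (conditionally on the design) i.i.d.\ Gaussian with variance $\rho^v$, remove the mean via the intercept column, and invoke the standard chi-squared distribution theory for the sum of squared residuals. You are in fact more careful than the paper's own two-line argument, which never writes down the projection $I_n-H^v_S$ that actually produces the $n-|S|$ degrees of freedom, and you correctly flag the two typos in the statement (the set should be $U=S^*\setminus S$ and the norm should be squared).
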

\begin{proof}

Regressing the target ${Y}$ only on the covariates in $S$ we obtain in environment $v \in [E_1]$ the linear model
\begin{equation*}
    {Y}^v=\beta^1_S {X^v_S}+r^v_S,
\end{equation*}
where $r^v_S=\beta^1_U {X^v_U}+\varepsilon^e$. Because of the normality and independence assumptions, we find that $r^v_S$ follows a normal distribution with variance $\rho^v=\sum_{u \in U}(\beta^1_u)^2(\sigma^1_u)^2+(\sigma_Y)^2$. Furthermore $r^v_S$ can be assumed to be of zero mean, due to our inclusion of a column of constant ones in $X^v_S$. This implies that $P^v\defined \frac{1}{\rho^v}\|r^v_S\|_2 \sim \chi^2(k)$ and we can define the equivalent expression for $w \in [E_2]$ by setting
$Q^w\defined \frac{1}{\rho^w}\|r^w_S\|_2 \sim \chi^2(k)$.
\end{proof}

\paragraph{Proof of Theorem \ref{thm:powerfalsenegative}.}
\begin{proof}

To identify when our method accepts $S$ as a set of potential causal parents, we need to control the probability $\mathbf{P}(\phi_S=0)$. For convenience, we introduce $Z_{\min}\defined \min_{e\in[E]} Z^e_S$ and $Z_{\max}\defined \max_{e\in[E]} Z^e_S$. The probability of a false negative can be bounded by Markov's Inequality as 
\begin{align} \label{eq:falsenegativemarkov}
\mathbf{P}(\phi_S(\boldsymbol{X},\boldsymbol{Y})=0)&=\mathbf{P}\left(\mathbf{P}\left(T(\boldsymbol{X},\boldsymbol{Y})>\frac{Z_{\text{min}}}{Z_{\text{max}}} \ \bigg\lvert \ \boldsymbol{X},\boldsymbol{Y} \right)\geq \alpha\right)   \\
& \leq \frac{1}{\alpha} \mathbf{P}\left(T(\boldsymbol{X},\boldsymbol{Y})>\frac{Z_{\text{min}}}{Z_{\text{max}}}\right). \label{eq:thmfalsenegativemarkov}
\end{align}
We continue with analyzing the quantity $\mathbf{P}(T(\boldsymbol{X},\boldsymbol{Y})>{Z_{\text{min}}}/{Z_{\text{max}}})$ and in particular try to understand the distribution of $T(\boldsymbol{X},\boldsymbol{Y})$ under Assumption \ref{ass:falsenegativesfinitesample}. By Lemma \ref{lem:residualdistribution} we know that $P^v\defined \frac{1}{\rho^v}\|r^v_S\|_2 \sim \chi^2(n-|S|)$ and $Q^w\defined \frac{1}{\rho^w}\|r^w_S\|_2 \sim \chi^2(n-|S|)$ for $\rho^v\defined \sum_{u \in U}(\beta^1_u)^2(\sigma^1_u)^2+(\sigma_Y)^2$ and $\rho^w\defined \sum_{u \in U}(\beta^2_u)^2(\sigma^1_u)^2+(\sigma_Y)^2$. This allows us to write:
\begin{equation*}
T(\boldsymbol{X},\boldsymbol{Y})=\frac{\min_{v \in [E_1],w \in [E_2]}(\min(\rho^v{P}^v,\rho^w {Q}^w))}{\max_{v \in [E_1],w \in [E_2]}(\max(\rho^v {P}^v,\rho^w{Q}^w))}.
\end{equation*}

With the help of Lemma \ref{lem:appendix:probbound} we may then for any $c>0$ bound
\begin{align}
\mathbf{P}\left(T(\boldsymbol{X},\boldsymbol{Y})>\frac{Z_{\text{min}}}{Z_{\text{max}}}\right)&\leq \mathbf{P}\left(\frac{\min\limits_{w \in [E_2]}\rho^w {Q}^w} {\max\limits_{v \in [E_1]}\rho^v {P}^v}>\frac{Z_{\text{min}}}{Z_{\text{max}}}\right)  \nonumber \\
    & \leq 2\mathbf{P}\left(\frac{\min\limits_{w \in [E_2]}\rho^w {Q}^w}{\max\limits_{v \in [E_1]}\rho^v {P}^v}>c\right)+2\mathbf{P}\left(c>\frac{Z_{\text{min}}}{Z_{\text{max}}}\right). \label{eq:thmsplittbylemma}
\end{align}

With the above inequality we are allowed to continue with the expression in line \eqref{eq:thmsplittbylemma} and we start with the first term  $\mathbf{P}({\min\limits_{w \in [E_2]}\rho^w {Q}^w}/{\max\limits_{v \in [E_1]}\rho^v {P}^v}>c)$.  By setting $c=\sqrt{\frac{\rho^w}{\rho^v}}<1$ we have that $c\frac{\rho^v}{\rho^w}=\frac{1}{c}$, which allows us to write
\begin{equation*}
\mathbf{P}\left(\frac{\min\limits_{w \in [E_2]} {Q}^w}{\max\limits_{v \in [E_1]} {P}^v}>c\frac{\rho^v}{\rho^w}\right)=\mathbf{P}\left(\frac{\max\limits_{v \in [E_1]} {P}^v} {\min\limits_{w \in [E_2]} {Q}^w}<c\right),
\end{equation*}
 which is now the quantity we study further. We further want to simplify this term by splitting it with the help of the two events
 \begin{align*}
     &\mathcal{E}_1\defined \left\{\frac{\max\limits_{v \in [E_1]} {P}^v}  {\min\limits_{w \in [E_2]} {Q}^w} <c\right\} \\
     &\mathcal{E}_2\defined \left\{{\max\limits_{v \in [E_1]} {P}^v}>k\sqrt{c} \ \land \ \min\limits_{w \in [E_2]} {Q}^w<k\frac{1}{\sqrt{c}}\right\}.
 \end{align*}
 Noting that $\mathbf{P}(\mathcal{E}_1, \mathcal{E}_2)=0$ we can bound
    \begin{align}
         \mathbf{P}\left(\frac{\min\limits_{w \in [E_2]} {Q}^w}{\max\limits_{v \in [E_1]} {P}^v}>c\frac{\rho^v}{\rho^w}\right)&= \mathbf{P}\left(\frac{\max\limits_{v \in [E_1]} {P}^v}  {\min\limits_{w \in [E_2]} {Q}^w} <c\right) =\mathbf{P}(\mathcal{E}_1)  \nonumber \\
        &\leq \mathbf{P}(\mathcal{E}_1, \mathcal{E}_2) + \mathbf{P}(\{\mathcal{E}_2\}^c) \\ 
        & \leq \mathbf{P}\left({\max\limits_{v \in [E_1]} {P}^v}<k\sqrt{c}\right)+\mathbf{P}\left({\min\limits_{w \in [E_2]} {Q}^w}>k\frac{1}{\sqrt{c}}\right). \label{eq:proofsplitminmax}
    \end{align}
    With the reminder that $c<1$ and that for any $v \in [E_1]$ and $w \in [E_2]$ the terms $P^v$ and $Q^w$ follow a Chi-square distribution with $k$ degrees of freedom we can use Lemma \ref{lem:concentrationbirge} to conclude that for any $v_0 \in [E_1]$
    \begin{equation} \label{eq:maxPinequality}
         \mathbf{P}\left({\max\limits_{v \in [E_1]} {P}^v}<k\sqrt{c}\right) \leq \mathbf{P}(P^{v_0} <k\sqrt{c}) \leq
         (\sqrt{c})^{\frac{k}{2}} e^{\frac{k}{2}(1-\sqrt{c})}.
    \end{equation}

    And similarly, we derive
    \begin{equation} \label{eq:minQinequality}
        \mathbf{P}\left({\min\limits_{w \in [E_2]} {Q}^w}>k\frac{1}{\sqrt{c}}\right)\leq \left(\frac{1}{\sqrt{c}}\right)^{\frac{k}{2}} e^{\frac{k}{2}\left(1-\frac{1}{\sqrt{c}}\right)}.
    \end{equation}
    With this we can control the first term of our intermediate target defined in \eqref{eq:thmsplittbylemma} as plugging Inequalities \eqref{eq:maxPinequality} and \eqref{eq:minQinequality} into \eqref{eq:proofsplitminmax} provides the result
    \begin{equation} \label{eq:prooffirstterm}
        \mathbf{P}\left(\frac{\max\limits_{v \in [E_1]} {P}^v} {\min\limits_{w \in [E_2]} {Q}^w} <c\right) \leq (\sqrt{c})^{\frac{k}{2}} e^{\frac{k}{2}(1-\sqrt{c})}+\left(\frac{1}{\sqrt{c}}\right)^{\frac{k}{2}} e^{\frac{k}{2}\left(1-\frac{1}{\sqrt{c}}\right)}.
    \end{equation}

    The other term in our target \eqref{eq:thmsplittbylemma} is given by  $\mathbf{P}({Z_{\text{min}}} / {Z_{\text{max}}}<c)$. To bound this term, we can use almost the exact same reasoning as for the first term, the only difference being the dependence on $E$ as for this term we use a union bound in the inequalities that correspond to \eqref{eq:maxPinequality} and \eqref{eq:minQinequality} for the previous term. In the end we obtain that
    \begin{equation} \label{eq:proofsecondterm}
        \mathbf{P}\left(\frac{Z_{\text{min}}}{Z_{\text{max}}}<c\right)\leq E\left((\sqrt{c})^{\frac{k}{2}} e^{\frac{k}{2}(1-\sqrt{c})}+\left(\frac{1}{\sqrt{c}}\right)^{\frac{k}{2}} e^{\frac{k}{2}\left(1-\frac{1}{\sqrt{c}}\right)}\right).
    \end{equation}
    Plugging the result from \eqref{eq:prooffirstterm} and \eqref{eq:proofsecondterm} back into \eqref{eq:thmsplittbylemma} we obtain
    \begin{equation*}
        \mathbf{P}\left(T(\boldsymbol{X},\boldsymbol{Y})>\frac{Z_{\text{min}}}{Z_{\text{max}}}\right) \leq 4E\left((\sqrt{c})^{\frac{k}{2}} e^{\frac{k}{2}(1-\sqrt{c})}+\left(\frac{1}{\sqrt{c}}\right)^{\frac{k}{2}} e^{\frac{k}{2}\left(1-\frac{1}{\sqrt{c}}\right)}\right).
    \end{equation*}

    Finally, plugging this back into Inequality \eqref{eq:thmfalsenegativemarkov} and noting that $c=\sqrt{\frac{\rho^w}{\rho^v}}$, we obtain the statement of the theorem.
\end{proof}

\subsection{Proof of Theorem \ref{thm:powerfalsenegativeinfiniteE}}
\label{app:proofpowerfalsenegativeinfiniteE}
First, we state two useful lemmas needed for the proof. We do not claim originality on those statements as those type of derivations may be found in literature on extremal events such as \cite{Embrechts2013ExtremeValueTheoryFinanceInsurance}. As we could, however, not find the precise statements needed, we prove them now.

\begin{lem} \label{lem:maxrationconcentrates}
    Let $q \in \mathbb{N}$ and for $e \in [E]$ let $C_1^e \stackrel{i.i.d}{\sim} \chi^2(k)$ and for $e \in [qE]$ let $C_2^e \stackrel{i.i.d}{\sim} \chi^2(k)$. For the random variables $Q_E=\max\limits_{e \in [E]}C_1^e$ and $W_E=\max\limits_{e\in [qE]} C^e_2$ it then holds that
    \begin{equation*}
    \lim\limits_{E \to \infty} \mathbf{P}\left(\frac{W_E}{Q_E}=1\right)=1.
    \end{equation*}
\end{lem}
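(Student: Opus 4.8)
The plan is to exploit the elementary extreme–value fact that the maximum of $n$ i.i.d.\ $\chi^2(k)$ variables, after dividing by $2\log n$, converges in probability to $1$. Since $Q_E$ and $W_E$ are maxima over collections of sizes $E$ and $qE$, both normalized maxima tend to $1$, and the only remaining discrepancy is the deterministic factor $\frac{2\log(qE)}{2\log E}\to 1$. I read the displayed claim as convergence in probability, $W_E/Q_E\to 1$, rather than literal equality: for two independent continuous collections the event $\{W_E/Q_E=1\}$ has probability zero at every finite $E$, so the intended (and provable) statement is $\mathbf{P}\!\left(|W_E/Q_E-1|>\eta\right)\to 0$ for every $\eta>0$.

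First I would record the two–sided tail of a $\chi^2(k)$ variable $C$: there exist constants $0<c_1\le c_2<\infty$ and $x_0$ so that $c_1\,x^{k/2-1}e^{-x/2}\le \mathbf{P}(C>x)\le c_2\,x^{k/2-1}e^{-x/2}$ for $x\ge x_0$, which follows from the explicit $\chi^2(k)$ density (the upper bound is also consistent with Lemma~\ref{lem:concentrationbirge}). Using these I would prove the concentration $Q_E/(2\log E)\to 1$ in probability by controlling both tails. For the upper tail a union bound gives, for any $\epsilon>0$,
\begin{equation*}
 \mathbf{P}\!\left(Q_E>(1+\epsilon)\,2\log E\right)\le E\,\mathbf{P}(C>(1+\epsilon)2\log E)\le c_2\,\big((1+\epsilon)2\log E\big)^{k/2-1}E^{-\epsilon},
\end{equation*}
which is $O\!\big((\log E)^{k/2-1}E^{-\epsilon}\big)\to 0$. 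For the lower tail, independence and the lower tail bound give
\begin{equation*}
 \mathbf{P}\!\left(Q_E<(1-\epsilon)\,2\log E\right)=\big(1-\mathbf{P}(C>(1-\epsilon)2\log E)\big)^{E}\le \exp\!\big(-E\,\mathbf{P}(C>(1-\epsilon)2\log E)\big)\to 0,
\end{equation*}
since the exponent is $\le -c_1\,(\log E)^{k/2-1}E^{\epsilon}\to-\infty$. The identical argument applied to the collection of size $qE$ yields $W_E/(2\log(qE))\to 1$ in probability.

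Finally I would assemble the pieces through the decomposition
\begin{equation*}
 \frac{W_E}{Q_E}=\frac{W_E/\big(2\log(qE)\big)}{Q_E/(2\log E)}\cdot\frac{2\log(qE)}{2\log E},
\end{equation*}
where the first factor is a ratio of two quantities each converging in probability to $1$ (the denominator being bounded away from $0$ with probability tending to $1$), and the deterministic second factor equals $\frac{\log E+\log q}{\log E}\to 1$. By Slutsky's theorem the product converges in probability to $1$, which is the assertion.

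I expect the main obstacle to be the bookkeeping of the polynomial prefactor $x^{k/2-1}$ in the tail estimates: one must verify it is harmless in the union bound, where it is overwhelmed by $E^{-\epsilon}$, and in the lower–tail estimate, where it only strengthens the bound. A secondary, purely conceptual point is to state explicitly that the displayed equality is to be understood as convergence in probability, since exact equality of two independent continuous maxima occurs with probability zero.
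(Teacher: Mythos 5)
Your proof is correct, and it takes a genuinely different route from the paper's. The paper does not derive the concentration of the maxima from scratch: it cites an extreme-value result (from the reference \cite{chisquareextremes}) asserting that there is a centering sequence $b_E\to\infty$ with $W_E-b_E$ converging in distribution, and then for $c>1$ picks $\delta$ with $c>\frac{1+\delta}{1-\delta}$ and bounds $\mathbf{P}(W_E/Q_E>c)\le \mathbf{P}(W_E>b_E(1+\delta))+\mathbf{P}(Q_E<b_E(1-\delta))$, both terms vanishing because the centered maxima are tight while $b_E\delta\to\infty$; the case $c<1$ is symmetric. You instead prove the needed concentration $Q_E/(2\log E)\to 1$ and $W_E/(2\log(qE))\to 1$ directly from two-sided $\chi^2(k)$ tail estimates via a union bound (upper tail) and the product formula for the minimum of the complements (lower tail), then conclude by Slutsky using that $\log(qE)/\log E\to 1$. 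Your argument is more self-contained and makes the $O(\log E)$ location of the maximum explicit, at the cost of slightly more bookkeeping with the polynomial prefactor $x^{k/2-1}$, which you handle correctly (it is dominated by $E^{-\epsilon}$ in the upper tail and only helps in the lower tail, and for $k=1$ the negative power of $\log E$ is still beaten by $E^{\epsilon}$). Your observation that the displayed claim must be read as convergence in probability rather than literal equality is also apt: the paper's own proof in fact establishes exactly $\mathbf{P}(|W_E/Q_E-1|>\eta)\to 0$ for every $\eta>0$, which is what is used downstream in the proof of Theorem~\ref{thm:powerfalsenegativeinfiniteE}, and its closing ``union bound over the events $\{W_E/Q_E\not\in[1-\frac1m,1+\frac1m]\}$'' does not upgrade this to exact equality.
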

\begin{proof}
    We prove the lemma by showing that for  any $c>1$ we have $\lim\limits_{E \to \infty} \mathbf{P}({W_E}/{Q_E}>c)=0$ and for any $c<1$ that $\lim\limits_{E \to \infty} \mathbf{P}({W_E}/{Q_E}<c)=0$. The statement of the lemma then follows from a union bound over the events $\{{W_E}/{Q_E} \not\in [1-\frac{1}{m},1+\frac{1}{m}\}_{m\in \mathbb{N}}$. \cite{chisquareextremes} show that there exists a series $b_E$ such that for $E \to \infty$ we have $b_E \to \infty$ and $(W_E-b_E)$ converges to a distribution with support on $\mathbb{R}$. With that in hand we start by showing the case for $c>1$. First, choose $\delta>0$ such that $c>\frac{1+\delta}{1-\delta}$. With this we have that
    \begin{align*}
         \mathbf{P}\left(\frac{W_E}{Q_E}>c\right)&\leq \mathbf{P}\left(\frac{W_E}{Q_E}>\frac{b_E(1+\delta)}{b_E(1-\delta)}\right) \\
        &\leq \mathbf{P}(W_E>b_E(1+\delta))+ \mathbf{P}(Q_E<b_E(1-\delta)).
    \end{align*}
We observe that for $E\to \infty$ the probability $\mathbf{P}(W_E>b_E(1+\delta))=\mathbf{P}((W_E-b_E)>b_E\delta)$ converges to $0$ since $(W-b_E)$ converges to a distribution with support on $\mathbb{R}$ and $b_E \to \infty$. Analogue to this one may show that $\lim\limits_{E\to \infty} \mathbf{P}(Q_E<b_E(1-\delta))=0$. The case for $c<1$ works analogue to $c>1$.
\end{proof}

\begin{lem} \label{lem:cdfofminchisquare}
    For $e\in [E]$  let $W^e \stackrel{i.i.d}{\sim} \chi^2(k)$. Then 
    \begin{equation*}
        \lim\limits_{E\to \infty}\mathbf{P}\left(E^{\frac{2}{k}} \min\limits_{e \in [E]}W^e  > w\right)=e^{-w^{\frac{k}{2}}c_0},
    \end{equation*}
    where $c_0 \in \mathbb{R}$ is a term constant in $w$.
    This also implies that the density function $f_E(w)$ of the random variable $\lim\limits_{E\to \infty} E^{\frac{2}{k}} \min\limits_{e \in [E]}W^e  $ is given by
    \begin{equation*}
f(w)=\frac{k}{2}w^{\frac{k}{2}-1}c_0 e^{-w^{\frac{k}{2}}c_0}.
    \end{equation*}
\end{lem}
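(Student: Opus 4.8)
The plan is to exploit the i.i.d.\ structure to reduce the whole statement to the left-tail behaviour of a single $\chi^2(k)$ variable at the origin, which is exactly the classical criterion for a minimum to fall in the Weibull domain of attraction. First I would use independence to rewrite the scaled survival probability as
\[
\mathbf{P}\left(E^{\frac{2}{k}}\min_{e\in[E]} W^e > w\right)
= \mathbf{P}\left(\min_{e\in[E]} W^e > E^{-\frac{2}{k}}w\right)
= \left(1 - F\bigl(E^{-\frac{2}{k}}w\bigr)\right)^{E},
\]
where $F=F_{\chi^2(k)}$. Since $E^{-2/k}w\to 0$, the entire limit is governed by how $F$ behaves as its argument tends to $0^+$.

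The second step is to establish the local expansion of $F$ at the origin. The $\chi^2(k)$ density equals $\tfrac{1}{2^{k/2}\Gamma(k/2)}\,t^{k/2-1}e^{-t/2}$, whose exponential factor tends to $1$ as $t\to 0$, so integrating the leading monomial gives
\[
F(t) = c_0\, t^{\frac{k}{2}}\bigl(1+o(1)\bigr),
\qquad t\to 0^+,
\qquad c_0 = \frac{1}{2^{k/2}\,\Gamma\!\bigl(\tfrac{k}{2}+1\bigr)}.
\]
This pins down both the exponent $k/2$ and the constant $c_0$ appearing in the statement.

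With this in hand I would pass to the limit through logarithms. Writing $p_E = F(E^{-2/k}w) = c_0 E^{-1} w^{k/2}(1+o(1))$ and expanding $\log(1-p_E) = -p_E + O(p_E^2)$, I obtain
\[
E\log(1-p_E) = -E p_E + E\,O(p_E^2)
\longrightarrow -c_0\, w^{\frac{k}{2}},
\]
because $E p_E \to c_0 w^{k/2}$ while $E p_E^2 = O(E^{-1})\to 0$. Exponentiating yields the claimed limiting survival function $e^{-c_0 w^{k/2}}$. The density of the limiting variable then follows by differentiating its (continuous, strictly increasing) cumulative distribution function $1 - e^{-c_0 w^{k/2}}$ on $(0,\infty)$.

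The main obstacle — essentially the only nontrivial point — is the control of the remainder terms in the last two steps: one must check that the correction to $F(t)=c_0 t^{k/2}$ is genuinely $o(1)$ sharply enough that, after multiplying by $E$ and raising to the power $E$, neither the multiplicative $o(1)$ inside $E p_E$ nor the quadratic remainder $E p_E^2$ survives in the limit. This is standard domain-of-attraction bookkeeping, consistent with the extremal-value references already cited, but it is where all the care is needed; everything else is algebra.
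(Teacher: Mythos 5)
Your proposal is correct and follows essentially the same route as the paper: reduce to $\bigl(1-F(E^{-2/k}w)\bigr)^E$, extract the leading-order behaviour $F(t)\sim c_0 t^{k/2}$ near the origin (the paper does this via the power series of the lower incomplete gamma function, you via the density; these are equivalent), and pass to the limit (the paper via the limit definition of the exponential, you via logarithms). Your explicit constant $c_0=1/(2^{k/2}\Gamma(k/2+1))$ is the correct one, and the remainder bookkeeping you flag is exactly the $O(E^{-(1+2/k)})$ term the paper carries.
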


\begin{proof}
    The cumulative distribution function $F(w)$ of any $W^e$ is given by $F(w)=\tilde{c}_0 \gamma\left(\frac{k}{2},\frac{w}{2}\right),$ where $\tilde{c}_0\defined \frac{1}{\Gamma(\frac{k}{2})}$ is a term constant in $w$, Here $\Gamma$ is the gamma function and $\gamma$ is the lower incomplete gamma function defined for $s>0,x>0$ as
    \begin{equation*}
        \gamma(s,x)=\int_{0}^x t^{s-1} e^{-t}dt.
    \end{equation*}
    Using the power series definition of the exponential term we can derive that 
    \begin{equation*}
        \gamma\left(\frac{k}{2},\frac{w}{2}\right)=\left(\frac{w}{2}\right)^{\frac{k}{2}} \sum_{m\geq 0}\left(\frac{-w}{2}\right)^{m}\frac{1}{m!(\frac{k}{2}+m)}=\frac{2}{k}\left(\frac{w}{2}\right)^{\frac{k}{2}}+O\left( w^{\frac{k}{2}+1}\right).
    \end{equation*}

    Next, using the relation $F(w)=\tilde{c}_0\gamma\left(\frac{k}{2},\frac{w}{2}\right)$ and the equation above we conclude that for $c_0 \defined \frac{2}{k} \tilde{c}_0$ it holds that
    \begin{align*}
        &\lim\limits_{E\to \infty }\mathbf{P}\left(E^{\frac{2}{k}} \min\limits_{e \in [E]}W^e  > w\right)=\lim\limits_{E\to \infty }\left(1-F\left( \frac{w}{E^{\frac{2}{k}}}\right)\right)^E \\
        &=\lim\limits_{E\to \infty }\left(1-\frac{1}{E}c_0w^{\frac{k}{2}}-O\left(E^{-\left(1+\frac{2}{k}\right)}\right)\right)^E=e^{-c_0w^{\frac{k}{2}}}.
    \end{align*}
    The last equality uses the limit definition of the exponential function.
        
    \end{proof}

\paragraph{Proof of Theorem~\ref{thm:powerfalsenegativeinfiniteE}}    
\begin{proof}

To identify when our method accepts $S$ as a set of potential causal parents, we have to understand the probability $\mathbf{P}(T_S > {\min_{e\in[E]} Z^e_S}/ {\max_{e\in[E]} Z^e_S}$), which is the key quantity for the hypothesis test $\phi_S$, defined in Equation \eqref{eq:hyptest}, that our Algorithm \ref{alg:maxmintest} uses. For notational convenience we set $Z_{\text{min}} \defined \min\limits_{e \in [E]}Z^e$ and $Z_{\text{max}} \defined \max\limits_{e \in [E]} Z^e$. For this proof we assume without loss of generality that $\rho^v>\rho^w$. We are investigating the behavior of
 $\mathbf{P}(T(\boldsymbol{X}_E,\boldsymbol{Y}_E)>{Z_{\text{min}}} / {Z_{\text{max}}})$ for $E \to \infty$, the statements of the theorem will then follow by Markov's Inequality.  By Lemma \ref{lem:residualdistribution} we know that $P^v\defined \frac{1}{\rho^v}\|r^v_S\|_2 \sim \chi^2(n-|S|)$ and $Q^w\defined \frac{1}{\rho^w}\|r^w_S\|_2 \sim \chi^2(n-|S|)$ for $\rho^v\defined \sum_{u \in U}(\beta^1_u)^2(\sigma^1_u)^2+(\sigma_Y)^2$ and $\rho^w\defined \sum_{u \in U}(\beta^2_u)^2(\sigma^2_u)^2+(\sigma_Y)^2$. This allows us to rewrite $T(\boldsymbol{X}_E,\boldsymbol{Y}_E)$ as:
\begin{equation*}
T(\boldsymbol{X}_E,\boldsymbol{Y}_E)=\frac{\min_{v \in [E_1],w \in [E_2]}(\min(\rho^v{P}^v,\rho^w {Q}^w))}{\max_{v \in [E_1],w \in [E_2]}(\max(\rho^v {P}^v,\rho^w{Q}^w))}.
\end{equation*}
To proof the first statement of the theorem we note that
\begin{align*}
    \mathbf{P}\left(T(\boldsymbol{X}_E,\boldsymbol{Y}_E)>\frac{Z_{\text{min}}}{Z_{\text{max}}}\right) & \leq \mathbf{P}\left(\frac{\min_{w \in [E_2]} \rho^w Q^w}{ \max_{v \in [E_1]} \rho^v P^v}>\frac{Z_{\text{min}}}{Z_{\text{max}}}\right) \\
    & =\mathbf{P}\left( \frac{\rho^w}{\rho^v} \frac{\min_{w \in [E_2]} Q^w }{ Z_{\text{min}} }> \frac{\max_{v \in [E_1]} P^v} {Z_{\text{max}}}\right).
\end{align*}
By Lemma \ref{lem:maxrationconcentrates} it holds that
\begin{equation} \label{eq:falsenegativeinfiniteEstartingpoint}
    \lim \limits_{E \to \infty} \mathbf{P}\left( \frac{\rho^w}{\rho^v} \frac{\min_{w \in [E_2]} Q^w} { Z_{\text{min}}} > \frac{\max_{v \in [E_1]} P^v} {Z_{\text{max}}}\right)= \lim \limits_{E \to \infty} \mathbf{P}\left( \frac{\rho^w}{\rho^v} \frac{\min_{w \in [E_2]} Q^w} { Z_{\text{min}}} >1\right).
\end{equation}
 For brevity we write $c\defined \frac{\rho^v}{{\rho^w}}=\frac{1}{I_S}$ and $Q\defined \min_{w \in [E_2]} Q^w $. Defining $f_E(x)$ as the density function of $E^{\frac{2}{k}}Z_{\text{min}}$ and using the result of Lemma \ref{lem:cdfofminchisquare} we continue with:
\begin{align}
   \lim \limits_{E \to \infty} \mathbf{P}\left(\frac{Q}{Z_{\text{min}}} >c\right)&=\lim \limits_{E \to \infty} \mathbf{P}\left(\frac{E^{\frac{2}{k}}Q}{E^{\frac{2}{k}}Z_{\text{min}}} >c\right) \nonumber \\ \label{eq:interchangeintegralandlimit1}
    &=\lim \limits_{E \to \infty} \int_0^{\infty} f_E(x) \mathbf{P}\left(|[E_2]|^\frac{2}{k}Q>2^{-\frac{2}{k}}cx\right)dx\\
    \label{eq:interchangeintegralandlimit2}
    &=\int_0^{\infty}\frac{k}{2}x^{\frac{k}{2}-1}c_0e^{-x^\frac{k}{2}c_0}e^{-\frac{1}{2}(cx)^{\frac{k}{2}}c_0}dx \\
    &=\int_0^{\infty}\frac{k}{2}x^{\frac{k}{2}-1}c_0e^{-x^\frac{k}{2}c_0\left(1+\frac{1}{2}c^\frac{k}{2}\right)}dx \nonumber \\
    &=-\frac{2}{c^{\frac{k}{2}}+2}e^{-x^{\frac{k}{2}}c_0\left(\frac{1}{2}c^{\frac{k}{2}}+1\right)}\biggr\rvert^{\infty}_0 =\frac{2}{c^{\frac{k}{2}}+2}=\frac{2(I_S)^{\frac{k}{2}}}{2(I_S)^{\frac{k}{2}}+1}. \nonumber
\end{align}
Here we use from \eqref{eq:interchangeintegralandlimit1} to \eqref{eq:interchangeintegralandlimit2} Lebesgue's dominated convergence theorem, which allows us to move the limit into the integral, and the limiting results from Lemma \ref{lem:cdfofminchisquare}. To summarize, we have shown that
\begin{equation*}
    \mathbf{P}\left(T(\boldsymbol{X}_E,\boldsymbol{Y}_E) > \frac{Z_{\text{min}}}{Z_{\text{max}}}\right) \leq \frac{2(I_S)^{\frac{k}{2}}}{2(I_S)^{\frac{k}{2}}+1}.
\end{equation*}
The first statement of the theorem then follows by using this bound, together with the Markov's Inequality applied to
\begin{equation*}
\mathbf{P}(\phi_S(\boldsymbol{X}_E,\boldsymbol{Y}_E)=0)=\mathbf{P}\left(\mathbf{P}\left(T(\boldsymbol{X}_E,\boldsymbol{Y}_E)>\frac{Z_{\text{min}}}{Z_{\text{max}}} \ \bigg\vert \ \boldsymbol{X}_E,\boldsymbol{Y}_E \right)\geq \alpha\right).
\end{equation*}
To proof the second statement of the theorem we first note that
\begin{align*}
 &\mathbf{P}\left(T(\boldsymbol{X}_E,\boldsymbol{Y}_E)>\frac{Z_{\text{min}}}{Z_{\text{max}}}\right) \\
 & \geq \mathbf{P}\left(\frac{\min_{v \in [E_1],w \in [E_2]}(\rho^w \min ({P}^v,{Q}^w))}{\max_{v \in [E_1],w \in [E_2]}(\rho^v\max( {P}^v,{Q}^w))}>\frac{Z_{\text{min}}}{Z_{\text{max}}}\right),
\end{align*}
making use of the assumption that $\rho^v>\rho^w$.
The normality and independence Assumption \ref{ass:gaussiannoise} together with the additional mutual independence assumption of the collection $\{P^v,Q^w\}_{v \in [E_1],w\in [E_2]}$ allows us again to apply Lemma \ref{lem:maxrationconcentrates} and similar derivations to the ones following Equation \eqref{eq:falsenegativeinfiniteEstartingpoint} then lead to the conclusion that
\begin{equation*}
   \lim\limits_{E \to \infty} \mathbf{P}\left(\frac{\min_{v \in [E_1],w \in [E_2]}(\rho^w\min ({P}^v,{Q}^w))}{\max_{v \in [E_1],w \in [E_2]}(\rho^v\max( {P}^v,{Q}^w))}>\frac{Z_{\text{min}}}{Z_{\text{max}}}\right)=\frac{(I_S)^{\frac{k}{2}}}{(I_S)^{\frac{k}{2}}+1}.
\end{equation*}
In summary this means that
\begin{equation*}
    \mathbf{P}\left(T(\boldsymbol{X}_E,\boldsymbol{Y}_E)>\frac{Z_{\text{min}}}{Z_{\text{max}}}\right) \geq \frac{(I_S)^{\frac{k}{2}}}{(I_S)^{\frac{k}{2}}+1}.
\end{equation*}

The second statement of the theorem follows if we combine the bound above together with a transformation of the bound given by the following Markov's Inequality:
\begin{align*}
 &1-\mathbf{P}\left(\mathbf{P}\left(T(\boldsymbol{X}_E,\boldsymbol{Y}_E)>\frac{Z_{\text{min}}}{Z_{\text{max}}} \ \bigg\vert \ \boldsymbol{X}_E,\boldsymbol{Y}_E \right)\geq \alpha\right) \\
   & =\mathbf{P}\left(1-\mathbf{P}\left(T(\boldsymbol{X}_E,\boldsymbol{Y}_E)>\frac{Z_{\text{min}}}{Z_{\text{max}}} \ \bigg\vert \ \boldsymbol{X}_E,\boldsymbol{Y}_E \right)\geq 1-\alpha\right)  \\
   & \leq \frac{1}{1-\alpha}\left(1-\mathbf{E}\left[\mathbf{P}\left(T(\boldsymbol{X}_E,\boldsymbol{Y}_E)>\frac{Z_{\text{min}}}{Z_{\text{max}}} \ \bigg\vert \ \boldsymbol{X}_E,\boldsymbol{Y}_E \right)\right] \right) .
\end{align*}
\end{proof}

\section{Data Generation and Additional Results} \label{sec:datageneration}
In this section we describe the precise data generation mechanisms used in our experiments from Section \ref{sec:experiments}. Unless otherwise stated we fixed $E=30$, $D=6$ and $|S^*|=2$. The data is generated from a linear structural equation model with different noise distributions, given by Equations \eqref{eq:semstart}-\eqref{eq:semend} further below. Here $\mathcal{D}(\sigma)$ is a distribution with standard deviation $\sigma$ and zero mean. The specific choices of $\sigma$ for the individual experiments are specified in the following subsections. The graphical representation of the SEM is shown in Figure \ref{fig:semgraphicalmodel}.
\begin{align} \label{eq:semstart}
& X_1=\mathcal{D}(\sigma_1) \\
& X_2=X_1+\mathcal{D}(\sigma_2) \\
& X_3=0.3 X_1+\mathcal{D}(\sigma_3) \\
& X_4=0.2 X_3+\mathcal{D}(\sigma_3) \\
& Y=\beta_2 X_2+\beta_3 X_3 +\mathcal{D}(\sigma_Y) \\
& X_5=0.1 X_2 +0.3Y+\mathcal{D}(\sigma_5) \\
& X_6=0.5Y+\mathcal{D}(\sigma_6) \label{eq:semend}
\end{align}
\paragraph{Effects of Non-Normal Noise.}
The data for a single run within the noise misspecification experiment is generated in the following way. For each environment $e\in [E]$ we sample a vector of standard deviations $(\sigma^e_1,\cdots,\sigma^e_6)$, such that each entry is independently sampled from the uniform distribution on $[1,5]$. The support entries of $\beta^e$, which are given by $S^*=\{2,3\}$, are sampled in the same way. The data is then created with the additional relations defined through Equation \eqref{eq:semstart}-\eqref{eq:semend}. The noise distributions $\mathcal{D}(\sigma_1),\cdots,\mathcal{D}(\sigma_6)$ for the covariates are Gaussian with standard deviations as described above, while the target noise distribution $\mathcal{D}(\sigma_Y)$ is either a uniform, Student-t or Gaussian distribution (as indicated in the figures) with $\sigma_Y=1.1$. 

\paragraph{Comparison with ICP}
As L-ICP and ICP are similar in many regards, we chose for three simple experiments that highlights the differences. We set $\mathcal{N}(m,s)$ to be the standard normal distribution with mean $m$ and standard deviation $s$. In all three experiments we independently sample in $E=100$ environments $n=7$ observations $X^e_{1,i}\sim \mathcal{N}(0,\sigma)$, $X^e_{2,i}\sim \mathcal{N}(0,\sigma)$ and $Y^e_i=\beta^e X^e_{1,i}+\varepsilon^e_i$ with $\varepsilon^e_i \sim \mathcal{N}(0,1)$. 

In the \emph{dense} setting we set $\beta^e=1$ and sample in each environment $\sigma$ from a uniform distribution on $[1,5]$.

In the \emph{sparse} setting we set $\beta^e=1$, as well as $s=1$ for $99$ out of the $100$ environments. In the last environment we set $s=3$.

In the ICP \emph{violation} setting we follow the dense setting, but additionally sample $\beta^e$ independently from a uniform distribution on $[1,5]$.

\paragraph{Comparison with LiNGAM.}
The data for a single run within the comparison to LiNGAM experiment is generated in the following way. For each environment $e\in [E]=[30]$ with $e\leq 15$ we set $(\sigma^e_1,\cdots,\sigma^e_6)=(2,\cdots,2)$ and the support entries of $\beta^e$, which are given by $S^*=\{2,3\}$, are set to $(1,1)$. For $e > 15$ we set $(\sigma^e_1,\cdots,\sigma^e_6)=(c,\cdots,c)$ and $\beta^e_2=\beta^e_3=c$ for $c\in \{1,1.2,1.4,1.5,1.7,1.8 \}$ in the uniform noise case, $c\in \{1,1.2,1.4,1.5,1.6\}$ for the Gaussian noise case and $c\in \{1,2,5,8\}$ for the scaled Student-t noise. To obtain the heterogeneity parameter $h$ we first apply Equation \ref{eq:heteroparameter} to obtain $I_{\{2\}}=I_{\{3\}}$ (the relevant quantities to avoid false negatives) and then set $h=(I_{\{2\}})^{\frac{1}{4}} e^{1-(I_{\{2\}})^{\frac{1}{4}}}$ as also explained in Section~\ref{sec:finitesampletheorems}. In this experiment we chose to not randomly sample the heterogeneity for a better control over it. The data is then created with the additional relations defined through Equation \eqref{eq:semstart}-\eqref{eq:semend}. The noise distributions $\mathcal{D}(\sigma_1),\cdots,\mathcal{D}(\sigma_6)$ and also $\mathcal{D}(\sigma_Y)$ are \emph{all} uniform distributions for the uniform noise experiment, Gaussian distributions for the Gaussian noise experiment and scaled Student-t distributions for the last experiment. In all cases we set $\sigma_Y=1$. Note that the experiment with the scaled Student-t distribution is found in Appendix~\ref{app:additionalexperiments}. 

\begin{figure}
    \centering
    \includegraphics[width=0.5\textwidth]{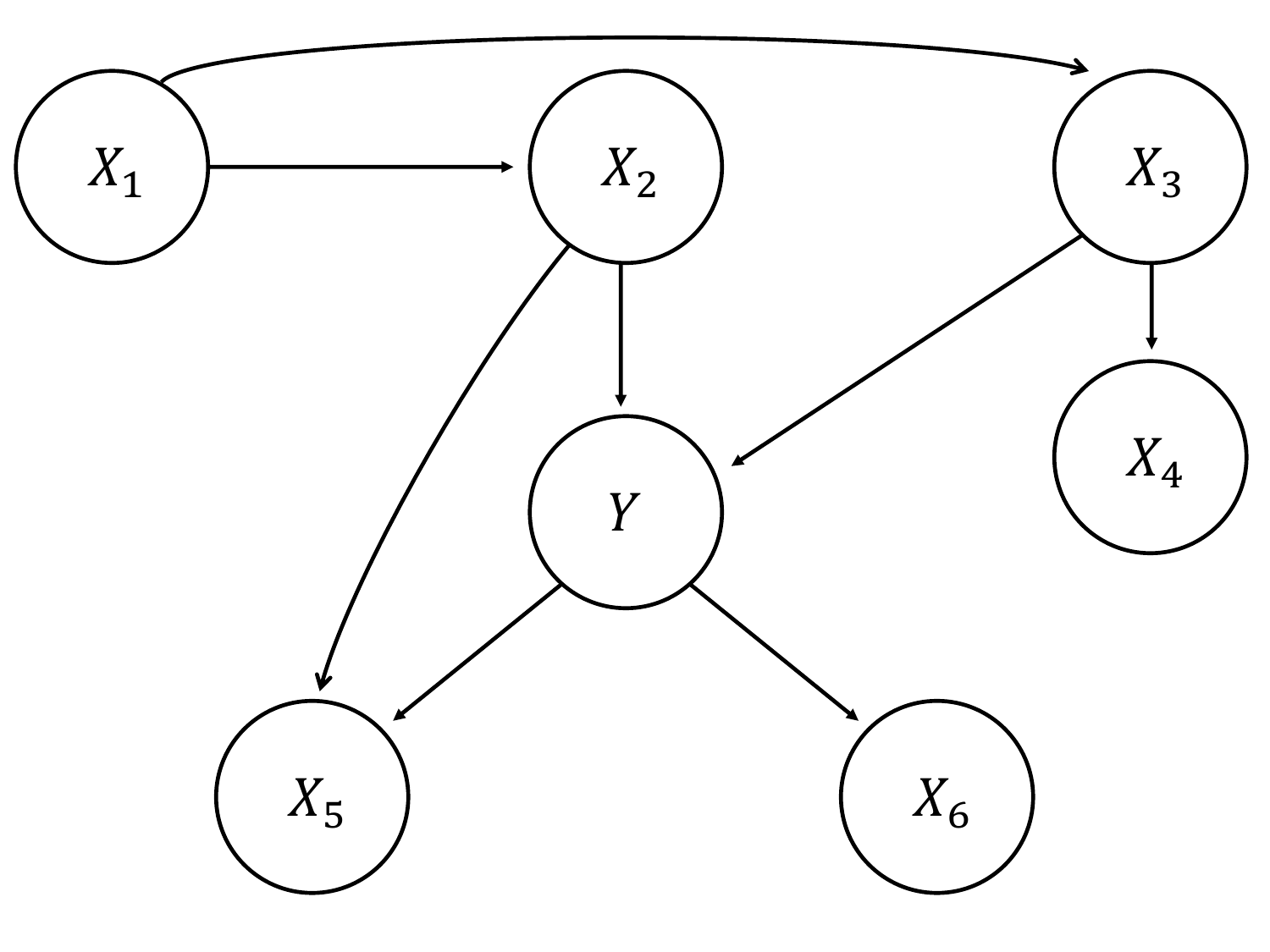}
    \caption{The structure of the linear structural equation model (SEM) we use in some experiments, ignoring the noise variables. The corresponding structural equations are given in \eqref{eq:semstart}-\eqref{eq:semend}.}
    \label{fig:semgraphicalmodel}
\end{figure}
\subsection{Additional Experiments} 
\label{app:additionalexperiments}

\paragraph{Adjusting the calibration under noise misspecification.}

In Figures \ref{fig:falsepositives} and \ref{fig:falsenegatives} of Section \ref{sec:experiments} we observed that under Student-t distributed noise, both the false negative and the false positive rate is adversely affected, in particular for larger sample sizes. In the following we show that this is not an inherent problem of our test statistic, but just due to the wrong calibration that assumes normal noise. If we adjust the target calibration $\alpha$ we can indeed recover a good performance as shown in Figure \ref{fig:studentadjustedalpha}. How such an adjustment may be done in practice is unclear, and for that reason an important extension of L-ICP will be to find ways to calibrate the method without the normality assumption.

\begin{figure}[h!]
    \centering
    \includegraphics[width=0.45\textwidth]{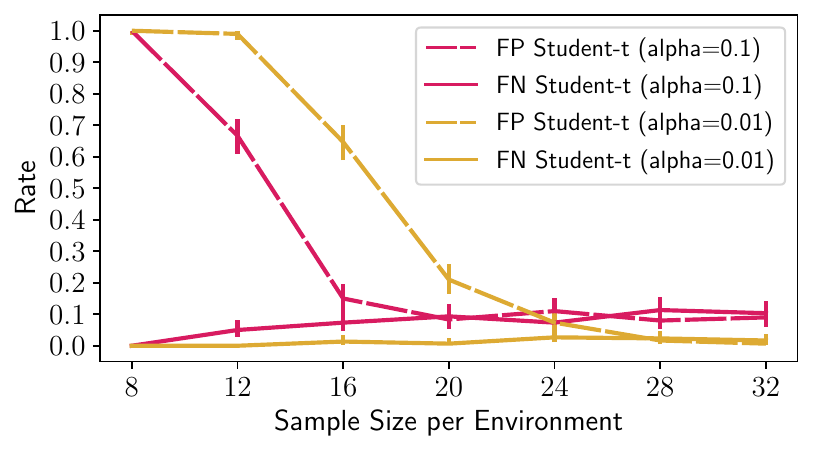}
    \caption{Under Student-t distributed noise, L-ICP achieves not the target calibration and this affects, in particular for larger samples, the performance. A near-optimal performance can be recovered if we adjust $\alpha$.}
    \label{fig:studentadjustedalpha}
\end{figure}

\paragraph{Comparison with LiNGAM under Student-t noise.} To further study the effect of noise misspecification on L-ICPs performance we perform an additional comparison with LiNGAM when the noise comes from a scaled Student-t distribution. The results are shown in Figures~\ref{fig:comparisonlingamstudentt=3} and~\ref{fig:comparisonlingamstudentt=10}. We notice that if we set the degree of freedom to $3$, so under a strong noise misspecification, L-ICP cannot recover good performance, also with strong heterogeneity of the environments. For $10$ degrees of freedom, resulting in a weaker noise misspecification, L-ICP is again able to recover a good performance when there is sufficient heterogeneity in the environments.

\begin{figure}
\centering
\begin{subfigure}[t]{0.485\textwidth}
    \includegraphics[width=\textwidth]{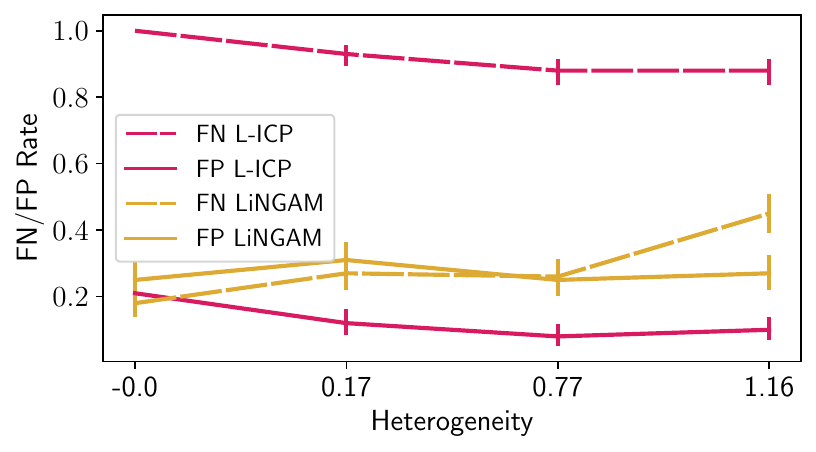}
    \caption{The results when the scaled Student-t distribution has $3$ degrees of freedom. While LiNGAM does make use of the strong non-normality, L-ICP can under the strong misspecifcation not recover a good performance, even when a lot of heterogeneity is present.}
    \label{fig:comparisonlingamstudentt=3}
\end{subfigure}
\quad
\begin{subfigure}[t]{0.485\textwidth}
    \includegraphics[width=\textwidth]{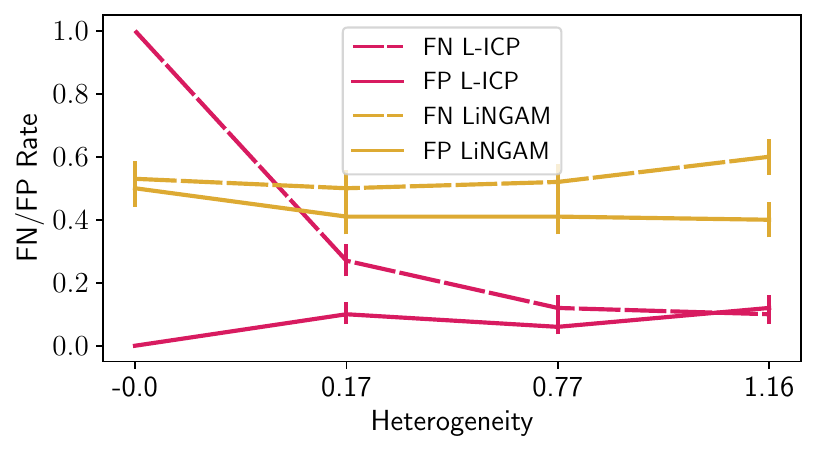}
    \centering
    \caption{The results when the scaled Student-t distribution has $10$ degrees of freedom. As the noise misspecification is less strong, L-ICP can recover again good performance with the presence of heterogeneity.}
    \label{fig:comparisonlingamstudentt=10}
\end{subfigure}

\end{figure}

\subsection{Full Results of the Network Detection}
\label{sec:lorenzfullresults}
The dynamical system is formally defined by the following set of equations, where the superscript $t$ indicates a time index.
\begin{align} \label{eq:lorenzbegin}
    & X^{t+1}_1=0.9X_1^t+0.1X^t_2+\varepsilon^t_1 \\
    & X^{t+1}_2=0.28X_1^t-0.01X^t_1X^t_3+0.99X^t_2 +\varepsilon^t_2\\
    & X^{t+1}_3=0.01X^t_1(X^t_2-X^t_4)+0.9733X^t_3+\varepsilon^t_3 \\
    & X^{t+1}_4=0.01X^t_1(X^t_3-2X^t_5)+0.9366X_4^t+\varepsilon^t_4 \\
    & X^{t+1}_5=0.02X^t_1X^t_4+0.96X^t_5+\varepsilon^t_5 \\
    & X^{t+1}_6=X^t_6+\varepsilon^t_6
    \label{eq:lorenzend}
\end{align}
  Here $X^t_1,\cdots,X^t_5$ defines the Lorenz system, while $X^t_6$ is the random walk. Furthermore $\varepsilon_i^t$ for $1\leq i \leq 6$ are random noise variables sampled independently from each other and past values from a standard normal distribution. 

Here we report the full counts of the experiments of Section~\ref{sec:experimentsapplicationlorenz}, additionally also when we use $n=20$ samples for L-ICP. More precisely, let $\tilde{S}_{r,j}$ be the set of causal parents that L-ICP with given sample size $n$ reported in run $r$ for target covariate $j$, then we define $M^n_{i,j}:=\sum_{r=1}^{500} \mathbf{1}\left\{ i \in \tilde{S}_{r,j} \right\}$. The results of the experiments from Section \ref{sec:experimentsapplicationlorenz} are then given by the following two matrices

$M^{20}= \begin{bmatrix}
494 & 43 & 54 & 109 & 82 & 4 \\
23 & 489 & 108 & 90 & 81 & 8 \\
7 & 130 & 442 & 120 & 53 & 3 \\
6 & 13 & 371 & 362 & 405 & 8 \\
9 & 19 & 44 & 361 & 405 & 6 \\
3 & 17 & 24 & 42 & 41 & 490 
\end{bmatrix},
M^{25}=
\begin{bmatrix}
498 & 68 & 87 & 110 & 96 & 2 \\
56 & 470 & 193 & 93 & 72 & 5 \\
3 & 238 & 337 & 108 & 58 & 3 \\
3 & 28 & 320 & 189 & 189 & 6 \\
3 & 27 & 97 & 227 & 227 & 3 \\
3 & 17 & 43 & 44 & 38 & 496 
\end{bmatrix}  .$

As PCMCI does not naturally group the environments, we run PCMCI over $30$ individual intervals of length $n=25$ in each of the $500$ runs. The complete counts for PCMCI are:

$N^{25}= \begin{bmatrix}
10778 & 1906 & 2209 & 1996 & 2033 & 1868 \\
2422 & 12439 & 2035 & 1835 & 1830 & 1870 \\
1766 & 2095 & 12816 & 2239 & 1852 & 1809 \\
1875 & 1816 & 2201 & 12874 & 3273 & 1838 \\
1909 & 1848 & 2000 & 3030 & 12980 & 1960 \\
1857 & 1832 & 1851 & 1898 & 1879 & 9968
\end{bmatrix}$

Based on the ground truth graph we picked a threshold of $1994$ and report the edge from $i$ to $j$ if $N^{25}_{i,j}>1994$.

\end{document}